\theoremstyle{definition}
\newtheorem{definition}{Definition}%[section]
\DeclarePairedDelimiter\ceil{\lceil}{\rceil}
\newcommand{\mathbbm}[1]{\text{\usefont{U}{bbm}{m}{n}#1}}
\definecolor{green3}{RGB}{0, 126, 0}
\newtheorem{theorem}{Theorem}
\newcommand{\G}{\mathcal{G}}
\title{Topological Relational Learning on Graphs}
\author{%
  Yuzhou Chen\\
  Department of Electrical Engineering\\
  Princeton University\\
  %Princeton, NJ 08544\\
  \texttt{yc0774@princeton.edu} \\
  % examples of more authors
  \And
  Baris Coskunuzer\\
  Department of Mathematical Sciences\\
  University of Texas at Dallas \\
  \texttt{coskunuz@utdallas.edu} \\
  \AND
  Yulia R. Gel\\
 Department of Mathematical Sciences\\
  University of Texas at Dallas \\
  \texttt{ygl@utdallas.edu} \\
  % \And
  % Coauthor \\
  % Affiliation \\
  % Address \\
  % \texttt{email} \\
  % \And
  % Coauthor \\
  % Affiliation \\
  % Address \\
  % \texttt{email} \\
}
\begin{document}

\maketitle

\begin{abstract}
Graph neural networks (GNNs) have emerged as a powerful tool for graph classification and representation learning. However, GNNs tend to suffer from over-smoothing problems and are vulnerable to graph perturbations. To address these challenges, we propose a novel topological neural framework of topological relational inference (TRI) which allows for integrating higher-order graph information to GNNs and for systematically learning a local graph structure. The key idea is to rewire the original graph by using the persistent homology of the small neighborhoods of nodes and then to incorporate the extracted topological summaries as the side information into the local algorithm. As a result, the new framework enables us to harness both the conventional information on the graph structure and information on the graph higher order topological properties. We derive theoretical stability guarantees for the new
local topological representation and discuss their implications on the graph algebraic connectivity.
The experimental results on node classification tasks demonstrate that the new TRI-GNN outperforms all 14 state-of-the-art baselines on 6 out 7 graphs and exhibit higher robustness to perturbations, yielding
up to 10\% better performance under noisy scenarios.
\end{abstract}

\section{Introduction}

Node classification is one of the most active research areas in graph learning. The target here is, given a single attributed graph $\mathcal{G}$ and a small subset of nodes with prior label information, to predict labels of all remaining unlabelled nodes. Applications of node classification are very diverse, ranging from customer attrition analytics to tracking corruption-convictions among politicians. %~\cite{colliri2019analyzing, yue2020graph, kazemi2019relational}. 
Graph neural networks (GNNs) offer a powerful machinery for addressing such problems on large heterogeneous networks, resulting in an emerging field of geometric deep learning (GDL) which adapts deep learning to non-Euclidean objects such as graphs~\cite{bronstein2017geometric, wu2020comprehensive,  zhang2020deep, chen2020bridging}.

Although GDL achieves a highly competitive performance in various graph-based classification and prediction tasks, similarly to the image domain deep learning on graphs is often found to be vulnerable to graph perturbations and adversarial attacks~\cite{sun2018adversarial, wu2019adversarial, jin2020adversarial}. 
%One of the reasons here is that graph convolution (i.e., the core mechanism of GNNs) heavily relies on the node neighborhood information, extracted through graph exploration via random walks or diffusion processes.Such graph explorations tend to focus on capturing primarily global graph topology, resulting in information aggregation of node features delivered even by nodes which are very faraway from the target node. In addition, such form of message passing that does not account for important information on local graph topology, leads to oversmoothing and further increases model vulnerability.
%As a result, the target node can be relatively easily misclassified by perturbing graph structure and node features. 
%by random walks~\cite{xu2018representation} or global graph topology by random walk diffusion processes~\cite{klicpera2019diffusion}. 
In turn, most recent results~\cite{sun2020fisher,derr2020epidemic}  suggest that local graph information may be invaluable for robustifying GDL against graph perturbations and adversarial attacks. 
Also, as shown by~\cite{mossel2016local, avrachenkov2020almost, yuvaraj2021topological} in conjunction with network community learning, local algorithms (i.e., algorithms based only on a small radius neighborhood around the node) may demonstrate superior performance if coupled with  side information in the form of a node labeling positively correlated with the true graph structure. Inspired by these results,  
the ultimate goal of this paper is  to introduce the idea of local algorithms with {\it local topological side information} on similarity of node neighborhood shapes into GNN. By {\it shape} here, we broadly understand data characteristics which are invariant under continuous transformations such as stretching, bending, and compressing, and to study such graph shapes properties, we invoke the machinery of topological data analysis (TDA)~\cite{edelsbrunner2010computational,carlsson2014topological}.  

{\bf Topological Relational Inference: from Matchmaking to Adversarial Graph Learning and Beyond} In particular, to capture more complex graph properties and enhance model robustness, we introduce the concept of
topological relational inference (TRI)
and propose two novel options for information passing which rely on
the local topological structure of each node: (i) Inject a new topology-induced multiedge between two nodes based on shape similarity of their local neighborhoods;  (ii) Enrich each individual node features by harnessing  local topological side information from its neighbors. The rationale behind our idea is multi-fold. First, we assess not only global graph topology and relationships between the feature sets of two individual nodes, as with the current diffusion mechanisms and random walks on graphs, but we also examine important {\it interactions between shapes of the feature sets of the node neighborhoods}. As a result, 
our new topology-induced multigraph representation (TIMR) of graph $\mathcal{G}$ in (i)
strengthens the relationship among nodes which might not be (yet) connected by a visible (or "tangible") edge in $\mathcal{G}$ but whose higher order intrinsic characteristics are very similar. The intuitive analogy here might be with a matchmaking agency who aims to connect two people on a blind date, based on a careful assessment of interplay among 
similarities of their interests, socio-demographics as well as that of their close friends, rather than bringing in two individuals together through a random walk among all available profiles of potential candidates. Another example is the customer churn and retention analytics based on peer effects~\cite{chen2018deep}. That is, the goal here is to classify the node as potential churn customer based not only on the individual attributes but on interactions among its neighbor attributes. Second, such a new multiedge in (i), based on shape similarity of node local neighborhoods may assist not only the matchmaking agency in coupling the most appropriate individuals (or link prediction in less romantic and more technical terms), but to enhance resilience of the graph structure and associated graph learning. Indeed, a new multiedge in (i) can be used as an essential remedial ingredient in graph defense mechanisms as criteria to clean the perturbed graph, i.e., to reconstruct 
edges removed by the attacker or to suppress edges induced by the attacker, depending on the strength of the topology-induced link. Third, by learning shape properties within each local node neighborhood, our TRI allows for systematic recovering of intrinsic higher-order interactions among nodes well beyond a level of pair-wise connectivity, and such local topological side information in (ii) enhances accuracy in node classification tasks even on clean graphs. Significance of our contributions are the following:
\begin{itemize}[leftmargin=*]
\item We propose a novel perspective to graph learning with GNN -- topological relational inference, based on the idea of similarity among shapes of local node neighborhoods. 

\item We develop a new topology-induced multigraph representation of graphs which systematically accounts for the key local information on the attributed graphs and serves as an important remedial ingredient against graph perturbations and attacks.

\item We derive theoretical stability guarantees for the new   
local topological representation of the graph and discuss their implications for the graph algebraic connectivity. 

\item Our expansive node classification experiments show that TRI-GNN outperforms 14 state-of-the-art baselines on 6 out 7 graphs and delivers substantially higher robustness (i.e., up to 10\% in performance gains under noisy scenarios) than baselines on all 7 datasets.
\end{itemize}

\section{Related Work}
%Our model draws inspiration from graph neural networks, persistent homology, and graph adversarial attacks.

{\bf Graph Neural Networks}
Inspired by the success of the convolution mechanism on image-based tasks, GNNs continue to attract an increasing attention in the last few years. %They can be generally categorized into two types, i.e., spectral~\cite{bruna2013spectral,defferrard2016convolutional,kipf2017} and spatial methods~\cite{hamilton2017inductive,monti2017geometric}. 
Based on the spectral graph theory, \cite{bruna2013spectral} introduced a graph-based convolution in Fourier domain. However, complexity of this model is very high since all Laplacian eigenvectors are needed. To tackle this problem, ChebNet~\cite{defferrard2016convolutional} integrated spectral graph convolution with Chebyshev polynomials. Then, Graph Convolutional Networks (GCNs) of~\cite{kipf2017} simplified the graph convolution with a localized first-order approximation. More recently, there have been proposed various approaches based on accumulation of the graph information from a wider neighborhood, using diffusion aggregation and random walks. Such higher-order methods include approximate personalized propagation of neural predictions (APPNP)~\cite{klicpera2018predict}, higher-order graph convolutional architectures (MixHop)~\cite{abu2019mixhop}, multi-scale graph convolution (N-GCN)~\cite{abu2020n}, and 
 L\'evy Flights Graph Convolutional Networks (LFGCN)~\cite{LFGCN}.
In addition to random walks, other recent approaches include GNNs on directed graphs (MotifNet)~\cite{monti2018motifnet}, graph convolutional networks with attention mechanism (GAT, SPAGAN)~\cite{velivckovic2017graph,yang2019spagan}, and graph Markov neural network (GMNN)~\cite{qu2019gmnn}. Most recently, Liu et al.~\cite{liu2020towards} consider utilizing information on the node neighbors' features in GNN, proposing Deep Adaptive Graph
Neural Network (DAGNN). However, DAGNN does not account for the important information on the shapes of the node neighborhoods.  

{\bf Persistent Homology for Graph Learning}
Machinery of TDA and persistent homology (PH) is increasingly widely used in conjunction with graph classification, that is, when the goal is to predict a label for the entire graph rather than for individual nodes. Such tools for graph classification  with persistent topological signatures include kernel-based methods~\cite{togninalli2019wasserstein,rieck2019persistent,le2018persistence, zhao2019learning, kyriakis2021learning} and neural networks~\cite{hofer2019learning, carriere2020perslay}. 
All of the above methods consider the task of classifying graph labels and are based on assessing `global' graph topology,
while our focus is node classification and our approach is based on evaluating local topological graph properties (i.e., shape of individual node neighborhoods). Integration of PH to node classification is virtually unexplored. To the best of our knowledge, the closest result in this direction is PEGN-RC~\cite{zhao2020persistence}.  However, the key idea of PEGN-RC is distinctly different from our approach.
PEGN-RC reweights {\bf only each existing edge}, based on the topological information within its edge vicinity and, in contrast to TRI-GNN, neither compares any shapes, nor creates new or removes existing edges. Importantly, PEGN-RC does not integrate topology of both graph and node attributes, while TRI-GNN does.

%{\bf Graph Adversarial Attacks}
%Lately, developments in GNNs have emphasized the role of stability properties, focusing on techniques that improve GNN robustness w.r.t. adversarial perturbations~\cite{zhu2019robust,bojchevski2019certifiable,tang2020transferring,wang2020certified}. Graph adversarial attacks include perturbations of graph structures and node features. The two main types of adversarial attacks are evasion attacks and poisoning attacks. Evasion attacks~\cite{dai2018} focus on perturbing the graph structure to fool a trained GNNs. In poisoning attacks, the attacker poison the training data, which can degrade the performance of GNNs significantly. The state-of-the-art poisoning attacks include non-targeted attack, e.g., \textit{metattack}~\cite{zugner2019adversarial} %(i.e., utilizing meta-gradients to conduct perturbations), and targeted attack such as and \textit{nettack}~\cite{zugner2018adversarial}.
%(i.e., modifying graph structure and node features).
%Recent defense mechanisms for GNN are discussed by~\cite{jin2020graph}.
\section{Preliminaries on Topological Data Analysis and Persistent Homology}\label{method_TDA}
The machinery of topological data analysis (TDA) and, particularly, persistent homology
offer a mathematically rigorous and systematic framework of tools to evaluate 
shape properties of the observed data, that is, intrinsic data characteristics which are invariant under continuous deformations such as stretching, compressing, and bending~\cite{zomorodian2005computing,edelsbrunner2010computational, munch2017user}. The main premise is that the observed data which can be, as in our case, a graph $\mathcal{G}$ or a point cloud in a Euclidean or any finite metric space constitute a discrete sample from some unknown metric space $\mathcal{M}$. Our goal is then to recover  
information on some essential properties of $\mathcal{M}$ which has been lost due to sampling.
Persistent homology addresses this reconstruction task by counting occurrences of certain patterns, e.g., loops, holes, and cavities, within shape of $\mathcal{M}$. Such pattern counts and functions thereof, called {\it topological signatures}
are then used to characterize intrinsic properties of $\mathcal{G}$.

The approach is implemented in two main steps. We start from associating $\mathcal{G}$ with some nested sequence of subgraphs $\mathcal{G}_1 \subseteq \mathcal{G}_2 \subseteq \ldots \subseteq \mathcal{G}_m=\mathcal{G}$. Then we monitor evolution of pattern occurrences (e.g., cycles, cavities, and more generally $n$-dimensional holes) in this nested sequence of subgraphs. To ensure a systematic and computationally efficient manner of pattern counting, we construct a simplicial complex $\mathcal{C}_i$ (e.g., a clique complex) induced by $\mathcal{G}_i$. The sequence  $\{\mathcal{G}_i\}$ induces a \textit{filtration}: nested sequence of simplicial complexes $\mathcal{C}_1 \subseteq \mathcal{C}_2 \subseteq \ldots \subseteq \mathcal{C}_m=\mathcal{C}$.
Now we can not only track patterns but also evaluate the lifespan of each topological feature. Let
$b_\sigma$ be the index of the simplicial complex $\mathcal{C}_{b_\sigma}$ at which we first record (i.e., birth)
the $n$-dimensional topological feature  $\sigma$ ($n$-cycle), while simplicial complex $\mathcal{C}_{d_\sigma}$ be the first complex we observe its disappearance (i.e., death). Then lifespan or {\it persistence} of the topological feature $\sigma$ is $d_\sigma-b_\sigma$. To evaluate all topological features together, we consider a {\it persistence diagram (PD)} where the multi-set $\mathcal{D}_n=\{(b_\sigma,d_\sigma) \in \mathbb{R}^2: d_\sigma > b_\sigma\}\cup \Delta$ records the birth and deaths of all $n$-cycles in the filtration $\{\mathcal{C}_i\}$. Here, $\Delta= \{(t, t) | t \in \mathbb{R}\}$ is the diagonal set containing points in PD, counted with infinite multiplicity.  %of points in $\mathbb{R}^2$, where $x$ and $y$ coordinates correspond to the birth and death of each topological feature, respectively. Since birth always precedes death, all points in $\mathcal{D}$ are in the half-space on or above $y=x$. The longer the lifespan $t_2-t_1 $ of a feature (i.e., the farther from $\Delta$), the likelier the feature to contain some useful information about graph  $\mathcal{G}$. Such topological features are said to \textit{persist}. In turn, features with shorter lifespan are called \textit{topological noise}. The reason $\Delta$ is included in $\mathcal{D}$ is to ensure that 
Different persistent diagrams can be compared based on the cost of the optimal matching between points of the two diagrams, while avoiding topological noise near $\Delta$~\cite{chazal2017introduction}. %when comparing different persistent diagrams
 %, while 

%We say that multiplicity of a point $\{(t_1,t_2) \in \mathbb{R}^2: t_2 > t_1\}$ is the number of $k$-dimensional topological features that are born at $t_1$ and die at $t_2$. In turn, the singleton $\Delta= \{(x, x) | x \in \mathbb{R}\}$ is a set of topological features with infinite multiplicity. In final, $~
%\YGL{Yuzhou, the filtering function below needs to be updated. I'm not sure which one you use}

Depending on the question at hand, we can construct different suitable filtrations relevant to the problem. In this paper, we consider two different filtrations. First, we consider the sublevel filtration based on a node degree function $f:\mathcal{V} \mapsto \mathbb{N}$. As degree is an integer valued function, so are our thresholds $\{\alpha_i\}\subset\mathbb{N}$. Our sublevel filtration is then defined as follows. Let $\mathcal{V}_{\alpha_i}=\{u \in \mathcal{V}\mid f(u)\leq \alpha_i\}$. Then,  $\mathcal{G}_{\alpha_i}$ is the subgraph generated by $\mathcal{V}_{\alpha_i}$. In particular, the edge $e_{uv} \in \mathcal{E}$ is in $\mathcal{G}_{\alpha_i}$ if both $u$ and $v$ are in $\mathcal{V}_{\alpha_i}$. We call this {\em degree based filtration}. In addition, we consider a second filtration defined by the edge-weight function on the graph where edge weights are induced by similarity of node attributes. We call this {\em attribute based filtration} (See Section~\ref{method_section}). Note that degree based filtration is induced only by the graph properties, while attribute based filtration is constructed by using the features coming from the observed data.
%While we use a filter function induced directly by graph properties itself, it is possible to choose different filter functions $f$ which are induced by the observed data.
%\BC{I made some minor revision here to underline we will use two different filtration in the paper.

%\BC{I made some revisions on the previous paragraphs for background material and degree filtration. I think we should mention here or later, we are using two different filtrations one derived from the attributes (qualitative) of the data, and the other is completely coming from the graph itself. these filtrations are used to define our similarity measures to define TIMR graph.}\YC{Mention both types of filtration}

\section{Topological Relational Inference  Graph Neural Network (TRI-GNN)}
\label{method_section}

%In this section, we first introduce some essential notations. We then describe how the higher order topological information can be learned using TRI-GNN framework.

\paragraph{Problem Statement} Let $\mathcal{G} = (\mathcal{V}, \mathcal{E})$ be an attributed graph with a set of nodes $\mathcal{V}$, a set of edges $\mathcal{E} \subseteq \mathcal{V} \times \mathcal{V}$ and $e_{uv} \in \mathcal{E}$ denoting an edge between nodes $u,v \in \mathcal{V}$. The total number of nodes in $\mathcal{G}$ is $N = |\mathcal{V}|$. Let $W \in \mathbb{R}^{N \times N}$ be a $N\times N$-adjacency matrix of $\mathcal{G}$ such that $\omega_{uv} =1$ for $e_{uv} \in \mathcal{E}$ and 0 otherwise, and $D$ be a diagonal $N\times N$-degree matrix with $D_{uu} = \sum_v \omega_{uv}$. For undirected graphs $W$ is symmetric (i.e., $W = W^\top$). To feed in directed graphs into the graph neural network architecture, we set $W^\prime = (W^\top + W)/2$. Finally, each $u\in \mathcal{V}$ is equipped with a set of node features, i.e., $X_{u}=(X_{u1}, X_{u2}, \ldots, X_{uF})$ represents an $F$-dimensional feature vector for node $u \in \mathcal{V}$, and $X$ is a $N \times F$-matrix of all node features. Our objective is to develop a robust semi-supervised classifier such that we can predict unknown node labels in $\mathcal{G}$, given some training set of nodes in $\mathcal{G}$ with prior labeling information. Figure~\ref{framework} illustrates our TRI-GNN model framework.

\subsection{Topology-induced Multigraph Representation}
\label{TE}

The first step in our TRI model with the associated Topology-induced Multigraph Representation (TIMR) of $\mathcal{G}$ is to define topological similarity among two node neighborhoods. The key goal here is to go beyond the vanilla local optimization algorithms which capture only pairwise similarity of node features~\cite{mossel2016local} and beyond only triangle motifs as descriptors of higher-order node interactions~\cite{monti2018motifnet, tudisco2020nonlinear}. Our aim is to systematically extract {\it all} $n$-dimensional topological features and their persistence in each node neighborhood and then
to compare node neighborhoods in terms of their exhibited shapes. %In our TRI approach we define node neighborhood as a $k$-hop induced subgraph with an imposed descriptor function inferred from node degree. \YC{Need to double check}
%In our TRI approach we define node neighborhood as an edge-weighted $k$-hop induced subgraph with an imposed edge-weight function inferred from node features. 

\begin{definition}[%Attribute-induced
Weighted $k$-hop Neighborhood]
\label{def1}
An induced subgraph $\mathcal{G}^k_u=(\mathcal{V}^k_u, \mathcal{E}^k_u)\subseteq \mathcal{G}$, equipped with an edge-weight function $\tau$ induced
by similarity of node features in $\mathcal{G}^k_u$, is called a {\it weighted 
$k$-hop neighborhood} of node $u\in \mathcal{V}$ if: (1) for any $v\in \mathcal{V}^k_u$, the shortest path between $u$ and $v$ is at most $k$;
(2) edge-weight function $\tau:\mathcal{V}^k_u \times \mathcal{V}^k_u \mapsto \mathbb{R}_{\ge 0}$ is such that for any $v, w \in \mathcal{V}^k_u$ with $e_{vw} \in \mathcal{E}^k_u$, $\tau_{vw}=||X_{v}-X_{w}||$, where $||\cdot||$ is either Euclidean distance (in the case of continuous node features), Hamming distance (in the case of categorical node features), Heterogeneous Value Difference Metric (HVDM) or other distance appropriate for mixed-type data~\cite{deza2009encyclopedia}.
If $\tau_{vw}\equiv 1$ for any $e_{vw} \in \mathcal{E}^k_u$, $\mathcal{G}^k_u$ reduces to a conventional $k$-hop neighborhood. 

%Let $\mathcal{N}_k(u)\in \mathcal{G}$ be an induced subgraph such that whose distance from $u$ is less than or equal to $k$. Let $v$ and $w$ be two nodes in $\mathcal{N}_k(u)$and $X_{v}, X_{w} \in \mathbb{R}^{F}$ be their associated node features, respectively. Given a $k$-hop neighborhood subgraph $\mathcal{G}^k_u$ of node $u$, two nodes $v_i, v_j \in \mathcal{N}_k(u)$, and their node feature vectors $X_{v_i}, X_{v_j} \in \mathbb{R}^{F}$, the edge weight $\tau_{v_i v_j}$ between node $v_i$ and $v_j$ is defined as 
%\begin{align}
%\tau_{v_i v_j} = \begin{cases}
%\frac{1}{F}\sum_{q = 1}^{F} \rho(X_{v_i q}, X_{v_j q}), & (i)\\||X_{v_i} - X_{v_j}||_2, & (ii)
%\end{cases}
%\end{align} 
%where (i) denotes the normalized hamming distance between two categorical node feature vectors, (ii) denotes the Euclidean distance between two continuous node feature vectors, and $\rho(X_{v_i q}, X_{v_j q}) = 1$ if $X_{v_i q} \neq X_{v_j q}$ and $0$ otherwise.
\end{definition}
%\YGL{I added on both types of neighborhoods definitions above as well as both types of filtrations below and in sec 3. We just need to make it clear on 2 types of TIMR (based on 2 types of filtrations) in the respective sections.}

Armed with the edge-weight function $\tau$ induced by node attributes (see Definition~\ref{def1}) or with a node degree function, we now compute a sublevel filtration within each node neighborhood and track lifespan of each extracted topological feature, e.g., components, loops, cavities, and $n$-dimensional holes (see previous section). 
Here we consider 
two cases: attribute based filtration ($\mathcal{G}^k_u$ is equipped with an edge-weight function $\tau$ based on node attributes) and degree based filtration ($\mathcal{G}^k_u$ is an unweighted graph with $\tau\equiv 1$). 
We can then compare how topologically similar shapes exhibited by node neighborhoods in terms of Wasserstein distance among their persistence diagrams. %\YC{In definition 2, ``for weighted k-hop neighborhood", remove weighted?}

\begin{definition}{\bf (Topological Similarity among $k$-hop Neighborhoods%Subgraph Wasserstein Distance
)} Let $\mathcal{D}(\mathcal{G}^k_{u})$ and $\mathcal{D}(\mathcal{G}^k_{v})$ be persistence diagrams of the weighted $k$-hop neighborhood subgraphs $\mathcal{G}^k_{u}$ and $\mathcal{G}^k_{v}$ of nodes $u$ and $v$, respectively.
We measure topological similarity between $\mathcal{G}^k_{u}$ and $\mathcal{G}^k_{v}$ with Wasserstein distance between the corresponding persistence diagrams as 
%\begin{eqnarray}
$d_{W_p}(\mathcal{G}^k_{u}, \mathcal{G}^k_{v}) 
%\coloneqq W_{p}\big(\mathcal{D}(\mathcal{G}^k_{u}),\mathcal{D}(\mathcal{G}^k_{v})\big)
=\inf_{\gamma} \bigl(
\sum_{{\tiny x\in \mathcal{D}(\mathcal{G}^k_{u}) \cup \Delta}}
||x-\gamma(x)||^p_{\infty}
\bigr)^{\frac{1}{p}}
$,
where $p \geq 1$ and $\gamma$ is taken over all bijective maps between  $\mathcal{D}(\mathcal{G}^k_{u}) \cup \Delta$ and $\mathcal{D}(\mathcal{G}^k_{v}) \cup \Delta$, counting their multiplicities. In our analysis we use $p=1$.
\end{definition}
%\begin{remark}
%The target $k$-hop subgraph $\mathcal{G}_u$ can be picked neighborhood of radius $k$ of a node $u$. Since for computational purpose, there are two choices for candidate $k$-hop subgraphs: (i) pick subgraphs of the $k$-th neighborhood of $u$, (ii) pick subgraphs of connected neighbors of $u$ after applying the random walk starting from $u$.
%\end{remark}

\begin{figure*}[ht]
    \centering
    \includegraphics[width=0.92\textwidth]{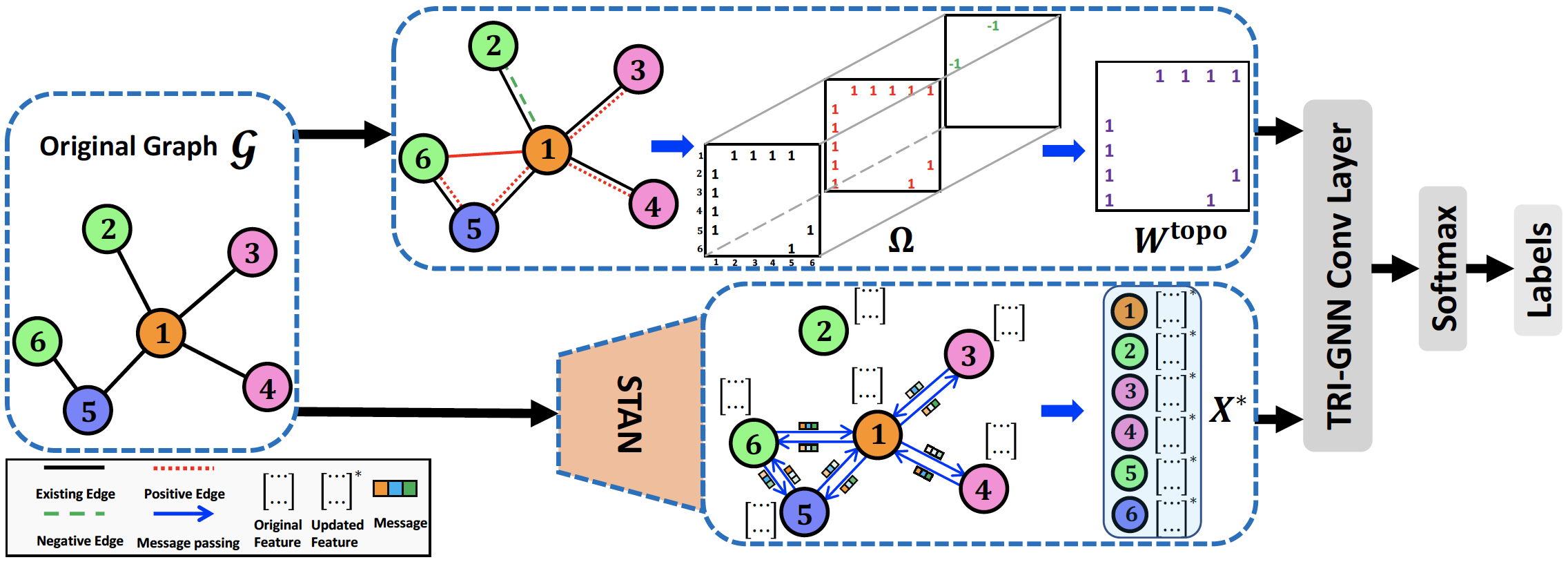}
    \caption{The TRI-GNN semi-supervised learning framework (for more details see Appendix B).\label{framework}}
\end{figure*}

%The challenging scenarios where a real graph structure is either noisy or incomplete. In the node classification task, adding edges within clusters should increase the cluster score while adding edges between different clusters would bring too much noise for nodes; and removing edges within clusters should decrease the cluster score while removing edges between different clusters should increase cluster score. Hence, adding/removing edges randomly or based on existing defined graph structure may has negative impact on node classification performance \YGL{I don't see the clear link with removing edges within a clustering framework and us}. 

%\begin{remark}
%Talk on removing edges
%\end{remark}

As noted by~\cite{mossel2016local}, integrating neighboring nodes whose labeling is positively correlated with the true cluster structure, as a 
side information to a community recovery process may lead to substantial performance gains. Inspired by these results, we not only inject topological side information into GNN but also distinguish this side information w.r.t. its strength.  We propose a new topology-induced multigraph representation of $\mathcal{G}$,
where a multiedge between nodes $u$ and $v$ comprises information on
(tangible) connectivity between  $u$ and $v$ (i.e, existence of $e_{uv}$) as well
as on strong and weak topological similarity of the local neighborhoods of $u$ and $v$, irregardless whether $e_{uv}$ exists.
%The general intuition behind topological-induced hypergraph is that we inject side information (e.g., the similarity of corresponding persistence summaries (e.g., PDs) of $k$-hop neighborhood subgraphs) into hyperedge and distinguish relations among nodes based on the strength of their topological similarity.

\begin{definition}{\bf (Topology-induced Multigraph Representation (TIMR))} Consider a graph $\mathcal{G} = (\mathcal{V}, \mathcal{E})$, with 
$d_{W,p}(\mathcal{G}^k_{u}, \mathcal{G}^k_{v})$ as a measure of
topological similarity among two $k$-hop neighborhoods $\mathcal{G}^k_{u}$ and $\mathcal{G}^k_{v}$, for all
$u, v \in \mathcal{V}$ (see Definition~\ref{def1}). Then a topology-induced multigraph representation of $\mathcal{G}$ is defined as $\Omega = (\mathcal{V}, \mathcal{E}^{\textup{topo}})$, where  $\mathcal{E}^{\textup{topo}}$ is a set of multiedges such that for any $u, v \in \mathcal{V}$ and thresholds $\epsilon_1, \epsilon_2 \in \mathbb{R}^+$
\begin{eqnarray}
%\footnotesize
%\scriptstyle
\label{multi}
e^{\textup{topo}}_{uv}
=\{\mathbbm{1}_{e_{uv}\in \mathcal{E}}, \mathbbm{1}_{d_{W,p}(\mathcal{G}^k_{u}, \mathcal{G}^k_{v}) \in [0,\epsilon_1]},
-\mathbbm{1}_{d_{W,p}(\mathcal{G}^k_{u}, \mathcal{G}^k_{v})\in [\epsilon_2, \infty)}\}.
\end{eqnarray}
Note that 
$\{d_{W,p}(\mathcal{G}^k_{u}, \mathcal{G}^k_{v})\in [0,\epsilon_1]\}$ and $\{d_{W,p}(\mathcal{G}^k_{u}, \mathcal{G}^k_{v})\in [\epsilon_2, \infty)\}$ are incompatible events.
%(i.e., $\{d_{W,p}(\mathcal{G}^k_{u}, \mathcal{G}^k_{v})< \epsilon_1\} \cap \{d_{W,p}(\mathcal{G}^k_{u}, \mathcal{G}^k_{v})> \epsilon_2\} =\emptyset$). 
Hence,  multiedges have multiplicity at most 2 and $\mathcal{E}^{\textup{topo}}$ is a multiset of $\{(1,1,0); (0,1,0); (1,0,-1); (0,0,-1)\}$. The intuition behind TIMR is to reflect a level of shape similarity among any two node neighborhoods, irregardless whether there exists an edge between these nodes in $\mathcal{G}$. Later, by using $\Omega$, we induce a graph $W^{topo}$ (Figure~\ref{framework}) defined as follows:  If neighborhoods of two nodes in $\G$ are topologically similar, we add an edge between the nodes if there is none. If neighborhoods of two nodes in $\G$ are topologically dissimilar, we remove the edge between the nodes if one exists.
%The hyperedge weight function $\mathcal{E}^{\textup{topo}} \in \mathbb{R}^{+}$ is the indicator function, which aims to capture higher-order graph information between any two nodes and remove noisy edges.
\end{definition}

%In particular, in  the first entry represent the existing edges in $\mathcal{E}$, the second entry represents $u$ and $v$ have topologically similar neighborhoods, and the third entry represents the neighborhoods of $u$ and $v$

Alternatively, we can associate
TIMR with {\it positive topology-induced} 
and {\it negative topology-induced} adjacency matrices, $W^{\textup{topo}^+}$ and $W^{\textup{topo}^-}$, respectively
\begin{eqnarray}
\label{topo_matrices}
W^{\textup{topo}^+}_{uv} = \big[  %\epsilon_1
0 \leq d_{W,p}^{f}(\mathcal{G}^k_{u}, \mathcal{G}^k_{v})< \epsilon_1 \big], \qquad
W^{\textup{topo}^-}_{uv} = \big[  \epsilon_2 < d_{W,p}^{f}(\mathcal{G}^k_{u}, \mathcal{G}^k_{v}) < \infty \big],
\end{eqnarray}
where $[\cdot]$ is an Iverson bracket, i.e., 1 whenever a condition in the bracket is satisfied, and 0 otherwise. %Here we set $\epsilon_1 = 0.01$,  $\epsilon_2 = 1$ and $s \in [1,2]$. 
Selection of hyperparameters $\epsilon_1$ and $\epsilon_2$ can be performed by assessing quantiles of the empirical distribution of shape similarities and then cross-validation. Thresholds $\epsilon_1$ and $\epsilon_2$ are used to reinforce meaningful edge structures and suppress noisy edges.

%{\bf Degree Based TIMR:} The construction of degree based TIMR graph is very similar to the attribute based TIMR. In partic

{\bf How does TIMR help?} TIMR allows us not only to inject a new edge among two nodes if shapes of their multi-hop neighborhoods are sufficiently close, but also to eliminate an existing edge if topological distance among their multi-hop node neighborhoods is higher than predefined threshold $\epsilon_2$. That is, TIMR adds an edge between the nodes whose ``similarity'' is detected by persistent homology, and removes the edge between ``topologically dissimilar'' nodes. This way, in the new TIMR graph, similar nodes gets closer, and dissimilar nodes gets farther away, thereby assisting throughout the node classification process. As a result, TIMR also mitigates the impact of noise edges and reduces the effect of over-fitting. Furthermore, while we have not formally explored TIMR in conjunction with formal defense mechanisms against adversarial attacks,
TIMR may be viewed as an essential remedial ingredient in defense. Indeed, TIMR offers an insight about one of the key defense challenges~\cite{jin2020graph}, namely,  
systematic criteria we should follow to clean the attacked graph -- TIMR
suggests to recover edges removed by the attacker with positive topology-induced links and to suppress edges induced by the attacker with negative topology-induced links.

\subsection{TIMR Theoretical Stability Guarantees}

We now establish theoretical stability properties of the TIMR average degree. The average degree is known to be closely related to performance in node classification tasks~\cite{joseph2016impact, dai2018adversarial, abbe2020graph} and, hence, degree-dependent regularization is often used in adversarial training~
\cite{nandanwar2016structural, wang2019adversarial, sun2020non}. Here we prove that under perturbations of the observed graph $\mathcal{G}$, average degrees of the TIMR graphs of the original and distorted copies remain close. That is, the proposed TRI framework allows us to increase robustness of the graph degree properties with respect to graph perturbations and attacks. Given that persistent homology representations are known to be robust against noise, our result appears to be intuitive. However, integration of graph persistent homology into node classification tasks
and its theoretical guarantees remain yet an untapped area.

Let $\G^+$ and $\G^-$ be two graphs of the same order. Let $T^+$ and $T^-$ be TIMR graphs of $\G^+$ and $\G^-$, constructed by the degree based filtration. We define \textit{local $k$-distance} between two graphs based on the Wasserstein distance $d_{W_p}$ between their persistence diagrams as follows. 
Let $u^\pm$ be a node in $\G^\pm$ and $\mathcal{G}^k_{u^\pm}$ be $k$-neighborhood of the $u^\pm$ in $\G^\pm$. Let 
$$\mathbf{d}_k(u^+, u^-)=d_{W_1}(\mathcal{D}_0(\mathcal{G}^k_{u^+}), \mathcal{D}_0(\mathcal{G}^k_{u^-})),$$
where $d_{W_1}$ is  Wasserstein-1 distance (see Definition~\ref{def1}), and $\mathcal{D}_0(\G)$ is the persistence diagram of $0$-cycles. Let $\varphi:V^+\to V^-$ be a bijection between node sets of $\G^+$ and $\G^-$, respectively. Then, the local $k$-distance between $\G^+$ and $\G^-$ is defined as $$\mathfrak{D}^k(\G^+,\G^-)=\min_{\varphi}\sum_{u^+}\mathbf{d}_k(u^+,\varphi(u^+)).$$
We now have the following stability result:

\begin{theorem}[Stability of Average Degree] Let $\G^+$ and $\G^-$ be two graphs of same size and order. Let $T^\pm$ be the TIMR graph induced by $\G^\pm$, with
thresholds $\epsilon_1$ and $\epsilon_2$. Let $\alpha^\pm$ be the average degree of $T^\pm$. Then, there exists a constant $K(\epsilon_1,\epsilon_2)>0$ such that  $$|\alpha^+-\alpha^-|\leq K(\epsilon_1,\epsilon_2)\mathfrak{D}^k(\G^+,\G^-).$$
\end{theorem}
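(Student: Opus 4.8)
The plan is to reduce the average-degree difference to a count of \emph{flipped} pairs and then to bound that count using the triangle inequality for $d_{W_1}$ together with the discreteness of degree-based persistence. First I would write the average degree of each TIMR graph as a sum of thresholded indicators. Under the degree-based filtration the edges of $T^\pm$ are governed by the topology-induced adjacency matrices $W^{\textup{topo}^+}$ and $W^{\textup{topo}^-}$, each an Iverson bracket of $d_{W_1}$ between the two node-neighborhood diagrams, so $\deg_{T^\pm}(u)$ is a sum over $v$ of such indicators and $\alpha^\pm=\tfrac{1}{N}\sum_{u}\deg_{T^\pm}(u)$. Fixing the bijection $\varphi$ that attains the minimum in $\mathfrak{D}^k(\G^+,\G^-)$, I would compare $T^+$ with $T^-$ edge-by-edge through the matched pairs $(u^+,v^+)\mapsto(\varphi(u^+),\varphi(v^+))$, giving $|\alpha^+-\alpha^-|\le \tfrac{2}{N}\,\#\{\text{pairs whose matched indicator differs}\}$; it then suffices to bound the number of flipped pairs.

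Next I would control, for a matched pair, the change in the pairwise topological distance. Writing $d^+_{uv}=d_{W_1}(\mathcal{D}_0(\mathcal{G}^k_{u^+}),\mathcal{D}_0(\mathcal{G}^k_{v^+}))$ and $d^-_{uv}$ for the corresponding quantity on the images $\varphi(u^+),\varphi(v^+)$, the triangle inequality for $d_{W_1}$ (a genuine metric on diagrams) applied to the four neighborhood diagrams yields $|d^+_{uv}-d^-_{uv}|\le \mathbf{d}_k(u^+,\varphi(u^+))+\mathbf{d}_k(v^+,\varphi(v^+))$. Hence a pair can flip only if $d^+_{uv}$ and $d^-_{uv}$ lie on opposite sides of one of the thresholds $\epsilon_1,\epsilon_2$, and for this the right-hand side above must be at least the gap separating the two sides.

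The crux, and the step I expect to be the main obstacle, is turning the discontinuous threshold indicator into a quantitative bound, since a naive Lipschitz estimate fails at the jump. Here I would exploit that the filtration values are integer node degrees, so every diagram $\mathcal{D}_0$ has integer birth/death coordinates; consequently every off-diagonal matching cost is an integer and every diagonal cost a half-integer, so each achievable value of $d_{W_1}$ lies on the half-integer lattice $\tfrac12\mathbb{Z}_{\ge0}$. Therefore, provided $\epsilon_1,\epsilon_2$ avoid this lattice, there is a uniform gap $g=g(\epsilon_1,\epsilon_2)>0$, the distance from the threshold to the nearest straddling pair of achievable values, such that any flipped pair satisfies $\mathbf{d}_k(u^+,\varphi(u^+))+\mathbf{d}_k(v^+,\varphi(v^+))\ge g$; in particular at least one endpoint, say $u^+$, has $\mathbf{d}_k(u^+,\varphi(u^+))\ge g/2$.

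Finally I would convert this into the stated bound by a Markov-type count. The number $M$ of nodes with $\mathbf{d}_k(u^+,\varphi(u^+))\ge g/2$ obeys $M\,(g/2)\le \sum_{u^+}\mathbf{d}_k(u^+,\varphi(u^+))=\mathfrak{D}^k(\G^+,\G^-)$, so $M\le (2/g)\,\mathfrak{D}^k(\G^+,\G^-)$. Every flipped pair has such a heavy endpoint, so the number of flipped ordered pairs is at most $2MN\le (4N/g)\,\mathfrak{D}^k(\G^+,\G^-)$, whence $|\alpha^+-\alpha^-|\le (4/g)\,\mathfrak{D}^k(\G^+,\G^-)$ with the factor $N$ cancelling. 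Setting $K(\epsilon_1,\epsilon_2)=8/g(\epsilon_1,\epsilon_2)$, the extra factor absorbing the separate $\epsilon_1$- and $\epsilon_2$-threshold contributions, yields the claim. The only genuine difficulty is the discontinuity of the indicator; everything else is the triangle inequality plus counting, and the quantization of $0$-dimensional degree-based persistence is precisely what defuses that difficulty.
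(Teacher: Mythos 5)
Your proposal is correct in substance, but it takes a genuinely different route from the paper's at the crucial counting step. Both arguments share a skeleton: reduce the average-degree difference to a count of node pairs whose thresholded indicator flips, apply the triangle inequality for $d_{W_1}$ across the optimal matching $\varphi$, and invoke the quantization of degree-based $0$-dimensional persistence to defuse the discontinuity at the thresholds. The paper, however, works globally: it introduces the counting functions $k^\pm(t)$ (number of pairs at pairwise diagram distance below $t$), derives $k^+(t)\leq k^-(t+2\delta)$ from the crude per-node bound $\mathbf{d}_k(u^+,\varphi(u^+))\leq\delta=\mathfrak{D}^k(\G^+,\G^-)$, and then splits into two cases around the minimal grid gap $r_0$: if $2\delta<r_0$ the locally constant counting functions cannot differ, and if $2\delta\geq r_0$ the trivial bound by the total number of pairs $M=m(m-1)/2$ is absorbed into $K_0\delta$ with $K_0=2M/r_0$. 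You instead retain the individual displacements $\mathbf{d}_k(u^+,\varphi(u^+))$, observe that every flipped pair has a heavy endpoint with displacement at least $g/2$, and close with a Markov count against $\sum_{u}\mathbf{d}_k(u,\varphi(u))=\mathfrak{D}^k(\G^+,\G^-)$. This buys something real: your constant $4/g$ depends only on the lattice gap at the thresholds, whereas the paper's constant, once unwound, is $\frac{2}{m}K_0=2(m-1)/r_0$ and so scales with the graph order; your bound is sharper, uniform in $N$, and arguably closer to what the theorem statement (with $K$ depending only on $\epsilon_1,\epsilon_2$) actually promises. Two points to tidy up, neither fatal: (i) flips of the tangible-edge indicator $\mathbbm{1}_{e_{uv}\in\mathcal{E}}$ under $\varphi$ are not controlled by $\mathfrak{D}^k$, so you must first cancel the original-edge contribution in aggregate via the same-size hypothesis $E(\G^+)=E(\G^-)$ — exactly as the paper does when it writes $\alpha(T^\pm)=\alpha(\G^\pm)+2k^\pm/m$ — and only then count topological flips; (ii) your caveat that $\epsilon_1,\epsilon_2$ avoid the half-integer lattice is unnecessary under the half-open Iverson-bracket conventions defining $W^{\textup{topo}^+}$ and $W^{\textup{topo}^-}$, since a flip then forces a jump of at least the lattice spacing $1/2$ in any case, so a uniform $g>0$ always exists.
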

%\YGL{Baris, is $K(\epsilon_1,\epsilon_2)$ a positive constant?}

%\BC{Yes. $K(\epsilon_1,\epsilon_2)$ is a positive constant depending on $\epsilon_1$ and $\epsilon_2$}

The proof of the theorem is given in Appendix~A in the supplementary material.

Furthermore, we conjecture the following result on stability of algebraic connectivity of TIMR graphs for the attribute based case. Here, the algebraic connectivity $\lambda_2(G)$ is the second smallest eigenvalue of the graph Laplacian $L(G)$, and it is considered to be a spectral measure to determine the robustness of the graph \cite{sydney2013optimizing}.

\paragraph{Conjecture:} Let $\G$ be a graph, and let $\G'$ be a graph which is obtained by adding one edge $e$ to $\G$, i.e., $\G'=\G\cup e$. Let $T,T'$ be the TIMR graphs induced by $\G,\G'$, respectively. Then,  $$|\lambda_2(T')-\lambda_2(T)|\leq K(\epsilon_1,\epsilon_2) |\lambda_2(\G')-\lambda_2(\G)|.$$

Note that this conjecture is not true for degree based TIMR graphs. The reason for that when one adds an edge $e_{uv}$ to $\G$, then this operation significantly changes $k$-neighborhoods of all nodes in $N_k(v_i)$ and $N_k(v_j)$. Since degree based TIMR construction depends on persistence diagrams of these $k$-neighborhoods, adding such an edge causes an uncontrollable effect on the algebraic connectivity. However, in attribute based construction, similarity is only based on the attributes, and the distances in the graph does not have an effect on edge addition/deletion decision. So, when the original graph $\G$ is perturbed by adding an edge, the attributes remain intact, and TIMR representations of the original and perturbed graphs are identical.

\subsection{STAN: learning from Subgraphs, Topology and Attributes of Neighbors}%STAN: learning from Subgraphs, Topology and Attributes for Node classification

For graphs with continuous or binary node features, higher-order form of message passing and aggregation (i.e., powers of the adjacency matrix) are shown to capture important structural graph properties that are inaccessible at the node-level. However, such higher-order architectures largely focus on global network topology and do not account for the important local graph structures.  Also, performance of such higher-order graph convolution architectures tend to suffer from over-smoothing and be susceptible to noisy observations, since their propagation schemes recursively update each node's feature vector by aggregating over information delivered by all further nodes captured by a random walk. 
% or global information. 
%since its propagation scheme is such that a higher-order graph structure recursively updates each node's feature vector by averaging over the node's multi-hop neighborhood.
%However often in practice, it seems that we need either a very large hop number $k$ or mix features from different power sets to obtain good high-order features. This will lead to unaffordable time and memory consumption. 
%Also the performance of higher-order graph convolution architecture can suffer substantially from noisy edges, especially when edge weights are available, since its propagation scheme is that higher-order graph structure recursively updates each node's feature vector by averaging over the node's multi-hop neighborhood.

Here we propose a new recursive feature propagation scheme \textbf{STAN} which updates the node features from {\bf S}ubgraphs, {\bf T}opology, and {\bf A}ttributes of {\bf N}eighbors. In particular, for each target node, STAN (i) converts these three features into a form of topological edge weights, (ii) calculates topological average of neighborhood features, (iii) aggregates them with the initial node features.

Let $X_u^{(0)} = X_u$ be the initial node features for $u \in \mathcal{V}$ and $\mathcal{T}$ be the number of STAN iterations. Then we iteratively update the feature representation of node $u$ 
using side information collected from nodes within its $k$-hop local neighborhood via
%\begin{align}
   $X_{u}^{(t+1)} =f_{\text{up}}\Big(\phi_{\text{aggr}}\bigl(X_{u}^{(t)}, \alpha(u)  \sum_{v \in \mathcal{V}_{k}(u)} \hat{d}_{uv} \cdot X_v^{(t)}\bigr)\Big)$, 
%\end{align}
where $f_{\text{up}}$ is the update function such as a {\it multi-layer perceptron (MLP)} and a {\it gated network}, $\phi_\text{aggr}$ is function that aggregates topological features into node features, such as {\it sum} and {\it mean}; $\alpha(u)$ is a weighting factor which can be set either as a hyperparameter or a fixed scalar, and  $\hat{d}_{uv}$ are normalized topological edge weights
$$%\footnotesize
\hat{d}_{uv} = \frac{\exp{\big[\big(d_{W,p}^{f}(\mathcal{G}_{u}, \mathcal{G}_{v})\big)^{-1}\big]}}{\sum_{v \in \mathcal{V}_{k}(u)}\exp{\big[\big(d_{W,p}^{f}(\mathcal{G}_{u}, \mathcal{G}_{v})\big)^{-1}\big]}}.$$
The core principle of STAN is to assign different importance scores to nodes, depending on how similar topologically their neighborhoods are to the target node, and to control the impact on the target node prediction when aggregating neighborhoods' information. For instance, suppose target node is $u$ and we also have nodes $v$ and $w$. We assign a higher weight to the edge between $u$ and $v$ (if the shapes of their neighborhoods are more similar) and a lower weight to the edge between $u$ and $w$ (if neighborhoods of $u$ and $w$ have more distinct topology). As a result, the updated node features of $u$ will be more affected by $v$. 
%As such, STAN allows for alleviating neighbor's negative impact by assigning a lower weight and for strengthening a positive impact by assigning a higher weight.

\begin{figure*}[ht]
    \centering
    \includegraphics[width=0.52\textwidth]{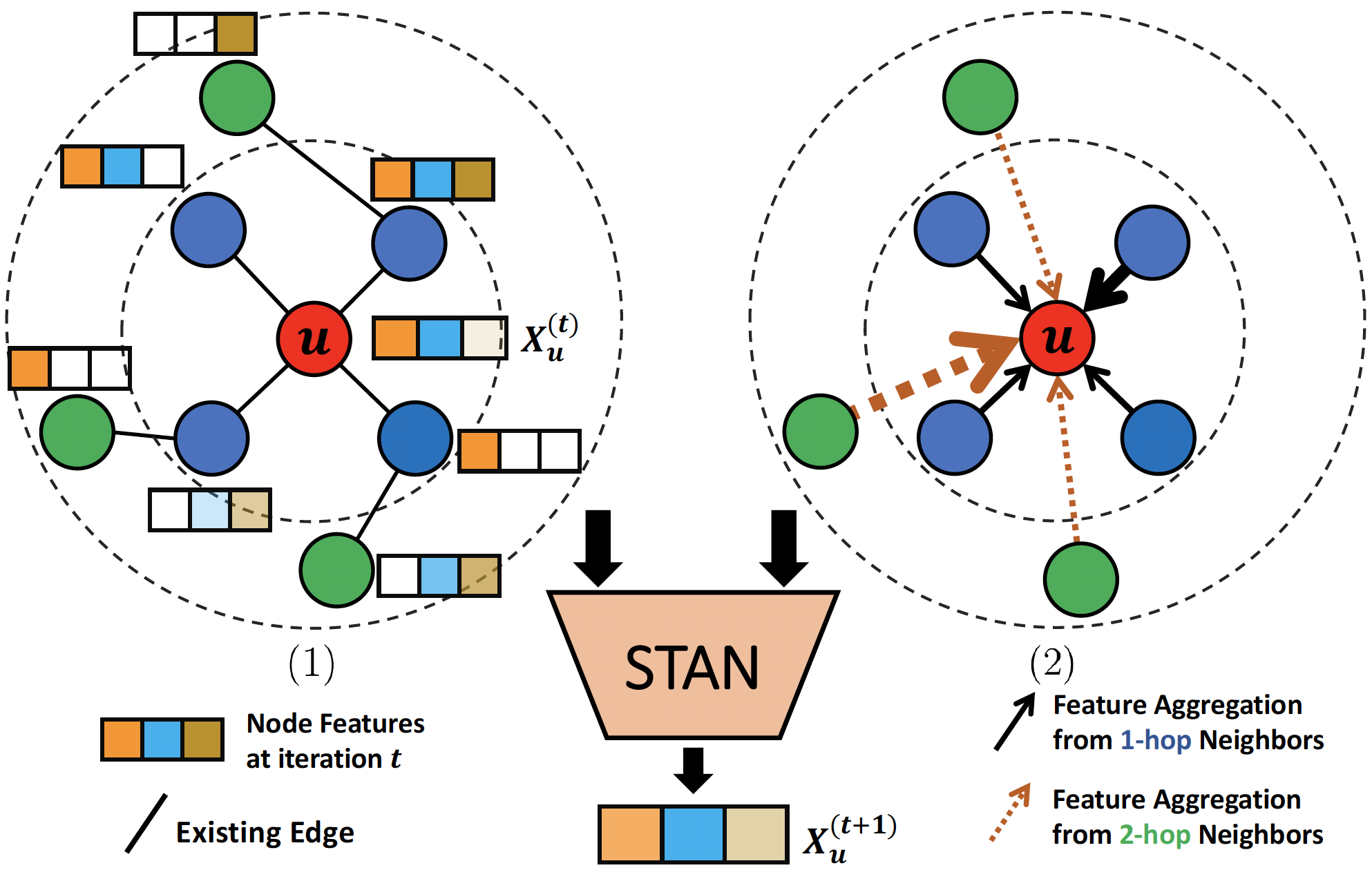}
    \caption{
    STAN for node feature vectors extension. The target node $u$ (red) with 2-hop neighborhood, where four 1-hop neighbors (blue) and three 2-hop neighbors (green). Each node is represented by a 3-component feature vector. We include more discussion on the STAN in Appendix B.%STAN utilizes node features from 2-hop neighborhood and normalized reciprocal topological distances (i.e., $\hat{d}_{uv}$) between the target node and its 2-hop neighborhood to produce a new vector representation for $u$. (1) shows node feature vectors for all nodes in 2-hop neighborhood of $u$ at iteration $t$. (2) shows normalized reciprocal topological distances between 2-hop neighborhood and $u$, solid arrow indicates the weights during 1-hop neighborhood feature aggregation  and dashed arrow indicates the weights during 2-hop neighborhood feature aggregation. After one iteration, STAN is applied to generate the feature vector for $u$ at iteration $t+1$. 
    }
\end{figure*}

\subsection{Convolution based TRI-GNN Layer}

We now turn to construction of the TRI-GNN layer. Let $\boldsymbol{W} \in \mathbb{R}^{N \times N \times 3}$ be a 3-dimension tensor, where $N$ is the number of nodes and 3 is the number of candidate adjacency matrices (i.e., $W$, $W^{\textup{topo}^+}$, and $W^{\textup{topo}^-}$). Let $\boldsymbol{W}_{uvr}$ be the $r$-th type of edge between nodes $u$ and $v$ of $\boldsymbol{W}$, where $u \in \{1, \dots, N\}$, $v \in \{1, \dots, N\}$, and $r \in \{1, 2, 3\}$. That is, we denote $W$, $W^{\textup{topo}^+}$, and $W^{\textup{topo}^-}$ by setting $r=1,2,3$. We then consider a {\it joint topology-induced} adjacency matrix $W^{\textup{topo}}$  
\begin{align}
%\footnotesize
    W^{\textup{topo}}_{uv} = \begin{cases}
    1, & \text{if}\enskip \sum_{r} \boldsymbol{W}_{uvr} >0\\
    0, &\text{otherwise}
    \end{cases}.
\end{align}
In the experiments, we utilize the classification function for the generalized semi-supervised learning as the default filter engine of TRI-GNN. Given $W^{\textup{topo}}$, we compute topological distance-based graph Laplacian $L^{\textup{topo}} = (\Tilde{D}^{\textup{topo}})^{-\sigma}\Tilde{W}^{\textup{topo}}(\Tilde{D}^{\textup{topo}})^{\sigma-1}$, where $\Tilde{W}^{\textup{topo}}  = (W^{\textup{topo}} + I)^{\rho}$, $\Tilde{D}^{\textup{topo}} = \sum_{v}\Tilde{W}^{\textup{topo}}_{uv}$, $\sigma \in [0,1]$, and $\rho \in (0,\infty)$. Equipped with the new graph Laplacian $L^{\textup{topo}}$ and node information matrix $X^{(\mathcal{T})}$ at iteration $\mathcal{T}$, TRI-GNN uses the following graph convolution layer
\begin{eqnarray*}
%\footnotesize
    H^{(\ell+1)} = \psi\left(\mu \left(I - \mu L^{\textup{topo}}\right)^{-1}H^{(\ell)}\Theta^{(\ell)}\right)= \psi \Big(\mu \sum_{i=0}^{\infty}\left(\mu L^{\textup{topo}}\right)^i H^{(\ell)}\Theta^{(\ell)}\Big),
\end{eqnarray*}
where $\mu \in (0,1]$ is a regularization parameter, $H^{(\ell)} \in \mathbb{R}^{N \times F_{\ell}}$ and $H^{(\ell+1)} \in \mathbb{R}^{N \times F_{\ell+1}}$ are input and output activations for layer $\ell$ ($H^{(0)} = X^{(\mathcal{T})}$), $\Theta \in \mathbb{R}^{F_{\ell} \times F_{\ell+1}}$ are the trainable parameters of TRI-GNN, and $\psi$ is an element-wise nonlinear activation function such as ReLU. In the experiments $\mu$ is selected from $\{0.1, 0.2, \ldots, 0.9\}$, and its choice impacts the number of terms in Taylor expansion. 
To reduce the computational complexity, based on the Taylor series expansion, we empirically find that $\text{max}_i = \ceil*{\mathcal{R}\mu}$ is a suitable rule of thumb (where $\mathcal{R} \in [2, 50]$), while the closer $\sigma$ is to 0, the more robust to the choice of regularization parameters the performance is. Note that in order to improve the robustness of TRI-GNN when it is exposed to noisy, we can consider convolution based TRI-GNN layer with parallel structure.

\begin{table}[ht]
\centering
\caption{Performance on semi-supervised classification. Average accuracy (\%) and standard deviation (\%) in (). \label{table_clean_classification}}
%\fontsize{9pt}{8pt} % used
%\footnotesize
\scriptsize
\begin{tabular}{lccccccc}
\toprule
\textbf{Method}&  \textbf{IEEE 118-Bus}& \textbf{ACTIVSg200}&\textbf{ACTIVSg500}&\textbf{ACTIVSg2000}&\textbf{Cora-ML}&\textbf{CiteSeer}&\textbf{PubMed}\\   \hline 
ChebNet&60.00 (3.11)&80.63 (4.20)&95.18 (1.00)&80.04 (0.37)&81.45 (1.21)&70.23 (0.80)&78.40 (1.10)\\
GCN&52.86 (0.78)&82.96 (1.66)&90.33 (0.68)&73.36 (0.41)&81.50 (0.40)&71.11 (0.72)&79.00 (0.53)\\
MotifNet&65.75 (0.77)&73.20 (1.61)&95.18 (0.53)&82.00 (0.44)&-&-&-\\
ARMA&70.55 (2.23)&80.07 (3.30)&94.33 (0.47)&81.20 (0.23)&82.80 (0.63)&72.30 (0.44)&78.80 (0.30)\\
GAT&70.23 (1.80)&83.56 (2.58)&95.00 (0.47)&83.00 (0.59)&83.11 (0.70)&70.85 (0.70)&78.56 (0.31)\\
RGCN&81.80 (1.79)&83.35 (3.12)&95.91 (0.50)&85.23 (0.40)&82.80 (0.60)&72.13 (0.50)&79.11 (0.30)\\
GMNN&78.88 (2.50)&84.13 (1.89)&96.57 (0.52)&86.21 (0.29)&83.72 (0.90)&73.10 (0.79)&\textbf{81.80 (0.53)}\\
LGCNs&71.43 (2.20)&83.78 (1.50)&95.14 (0.45)&85.57 (0.25)&83.35 (0.51)&73.08 (0.63)&79.51 (0.22)\\ 
SPAGAN&78.55 (2.25)&81.83 (2.09)&96.19 (0.40)&86.00 (0.26)&83.63 (0.55)&73.02 (0.41)&79.60 (0.40)\\ 
APPNP &82.05 (2.24)&81.21 (1.85)&96.11 (0.45)&86.77 (0.30)&83.31 (0.53)&72.30 (0.51)&80.12 (0.20)\\
VPN &82.19 (2.20)&79.89 (1.82)&96.23 (0.50)&86.87 (0.33)&81.89 (0.57)&71.40 (0.32)&79.60 (0.39)\\
LFGCN &82.20 (2.30)&81.11 (2.10)&97.61 (0.47)&87.74 (0.30)&84.35 (0.57)&71.89 (0.77)&79.60 (0.55)\\
MixHop&80.05 (2.50)&80.53 (1.96)&95.94 (0.40)&86.10 (0.25)&81.90 (0.81)&71.41 (0.40)&80.81 (0.58)\\
PEGN-RC&82.30 (1.90)&83.30 (1.10)&97.20 (0.50) &87.62 (0.74)&82.70 (0.50)&71.91 (0.63)&79.40 (0.70)\\ \midrule\midrule
%\textit{topo}PGNN with \textit{TopoEdge}&84.90 (0.60)&73.10 (0.70) &79.71 (0.50)&82.38 (2.00) &\textbf{88.82 (0.56})&97.69 (0.63)\\
%\textit{topo}PGNN with \textit{TopoAttr.}&84.80 (0.40)&\textbf{73.60 (0.70)} &79.80 (0.50)&\textbf{82.80 (1.64)} &88.59 (0.60)&\textbf{97.85 (0.56)}\\
%TRI-GNN&\textbf{\normalsize 82.80 (1.64)} % old TRI-GNN
TRI-GNN\textsubscript{A}&\textbf{ 82.80 (1.64)} & 84.93 (1.51) & 97.85 (0.56) & 88.82 (0.56) & 84.80 (0.55) & \textbf{ 73.45 (0.65)} & 79.80 (0.52)\\
TRI-GNN\textsubscript{D}& 82.67 (2.08)&\textbf{86.18 (0.20)}&\textbf{98.11 (0.43)}&
%\textbf{\normalsize88.82 (0.56)} % old TRI-GNN for TX
\textbf{89.06 (0.44)}
&\textbf{84.98 (0.49)}
%&\textbf{\normalsize73.45 (0.65)} % old TRI-GNN for Citeseer
&73.32 (0.48)
&79.70 (0.50)\\\bottomrule
\end{tabular}
\end{table}

\section{Experiments}
We now empirically evaluate the effectiveness of our proposed method on seven node-classification benchmarks under semi-supervised setting with different graph size and feature type. We run all experiments for 50 times and report the average accuracy results and standard deviations.

\subsection{Experimental Settings}
\label{experiment_details}
{\bf Datasets} We compare TRI-GNN with the state-of-the-art (SOA) baselines, using standard publicly available real and synthetic networks: (1) 3 citation networks~\cite{sen2008collective}: Cora-ML, CiteSeer, and PubMed, where nodes are publications and edges are citations; (2) 4 synthetic power grid networks~\cite{birchfield2016grid, birchfield2016statistical,gegner2016methodology}: IEEE 118-bus system, ACTIVSg200 system, ACTIVSg500 system, and ACTIVSg2000 system, where each node represents a load bus, transformer, or generator and we use total line charging susceptance (BR\_B) as edge weight. %Note that all 3 citation networks are unweighted undirected graphs, and three power grid networks are weighted directed graphs. Besides, in Cora-ML and Citeseer, the nodes have categorical input features but in other graphs, they have continuous input features. 
For more details see Table~\ref{data_stat} (in Appendix C.1).
%For the citation networks, we follow the experimental settings of ~\cite{kipf2017}, i.e., 20 labels per class in each citation network. For the power grid networks, we follow the splitting settings~\cite{LFGCN}, i.e., $10\%$ label rate for training set, $20\%$ for validation and $70\%$ for test sets. Summary statistics of the data are summarized in Table~\ref{data_stat}.
%\YGL{Yuzhou, I think that giving tight space, Table~\ref{data_stat} can be moved to Appendix}
%\begin{table}[h]
%\footnotesize
%\caption{Dataset summary statistics.}
%\label{data_stat}
%\setlength\tabcolsep{1.5pt}
%\begin{tabular}{lccccc}
%\toprule
%\multicolumn{1}{l}{\bf Dataset}  &{\bf %Vertices}&{\bf Edges}&{\bf Features}&{\bf %Classes}&{\bf Label rate}
%\\ \hline 
%IEEE 118-Bus&118&182&2&3&0.100\\
%ACTIVSg2000&2,000&2,668&5&3&0.100\\
%ACTIVSg500&500&584&5&3&0.100\\
%Cora-ML&2,708&5,429&1,433&7&0.052\\
%CiteSeer&3,327& 4,732&3,703&6&0.036\\
%PubMed&19,717&44,338&500&3&0.003\\
%\bottomrule
%\end{tabular}
%\end{table}

\vspace{1mm}
{\bf Baselines} We compare TRI-GNN with 14 SOA GNNs: %from the literature and relevant baselines: 
(i) 5 higher-order graph convolution architectures: LGCNs~\cite{gao2018large}, APPNP~\cite{klicpera2018predict}, VPN~\cite{jin2019power}, LFGCN~\cite{LFGCN}, and MixHop~\cite{abu2019mixhop}; (ii) 2 graph attention mechanism: GAT~\cite{velivckovic2017graph} and SPAGAN~\cite{yang2019spagan}; (iii) 4 GNNs with polynomial filters: ChebNet~\cite{defferrard2016convolutional}, GCN~\cite{kipf2017}, MotifNet~\cite{monti2018motifnet}, and ARMA~\cite{bianchi2019graph}; (iv) 2 GNNs by learning the generative distribution: GMNN~\cite{qu2019gmnn} and RGCN~\cite{zhu2019robust}; and (v) 1 GNNs with  topological information: PEGN-RC~\cite{zhao2020persistence}. In the experiments, we implement two variants of TRI-GNN, i.e., TRI-GNN\textsubscript{A} (node attributes as filtration function) and TRI-GNN\textsubscript{D} (node degree as filtration function). In addition, in our robustness experiments, we select the 4 strongest baselines in  each of the GNN areas: random walks, graph attention networks, higher-order graph convolutional models, and robust graph convolutional networks (i.e., APPNP, GAT, LFGCN, and RGCN). More details on experimental setup are in Appendix C.2. The source code of TRI-GNN is publicly available at~\url{https://github.com/TRI-GNN/TRI-GNN.git}.

%\begin{figure*}[t]
%    \centering
%    \includegraphics[width=1.0\textwidth]{}
%    \caption{Node classification accuracy under non-targeted metattack.\label{metattack}}
%\end{figure*}

%\begin{figure*}[t]
%    \centering
%    \includegraphics[width=1.0\textwidth]{}
%    \caption{Node classification accuracy under random attack.\label{randomattack}}
%\end{figure*}

%\subsection{Node Classification on Clean Graphs}
\subsection{Performance on Node Classification}
Table~\ref{table_clean_classification} shows the results for node classification results on graphs. We observe that TRI-GNN outperforms all baselines on Cora\-ML, CiteSeer, and four power grid networks, while achieving comparable performance on PubMed. 
%Except for PubMed, TRI-GNN has obtained (against the second best accuracy result) accuracy gains from $0.24\%$ on ACTIVSg500 to $1.23\%$ on ACTIVSg2000. 
This consistently strong performance of TRI-GNN 
indicates utility of integrating 
more complex graph structures through aggregating higher-order topological information.  %and also builds a foundation for the TRI-GNN robust learning under graph perturbations.
%various adversarial attack scenarios. 
The best results are highlighted in bold for all datasets. As  Table~\ref{table_clean_classification} shows, TRI-GNN\textsubscript{D} and TRI-GNN\textsubscript{A} outperform
all baselines on all 7 but 1 benchmark datasets, yielding relative gains of
0.5-2.5\% on clean graphs. Overall, TRI-GNN\textsubscript{D} appears to be the most competitive approach,
especially for ACTIVSg200 and ACTIVSg500 systems. Nevertheless, TRI-GNN\textsubscript{A} tends to follow TRI-GNN\textsubscript{D} relatively closely.
%The improvement gain of our TRI-GNNs over the next most accurate baselines(LFGCN) ranges from 0.57\%  to 4.50\% on all power grid networks. 
It validates that the framework of topological relational inference can integrate topological information across multigraph representation and feature propagation, respectively. %For citation networks, TRI-GNN achieves relative gains of 0.53\% (for IEEE 188-Bus) and 0.30\% over the best baseline on Cora-ML (i.e.,  vs. LFGCN) and CiteSeer, respectively. 
On PubMed, GMNN achieves a better accuracy than both TRI-GNN versions due to PubMed exhibiting the weakest structural information with very few links per node on average, but both TRI-GNN models still deliver highly competitive results. In addition, we find that for sparser heterogeneous graphs, with lower numbers of node attributes per class, richer topological structure (i.e., synthetic ACTIVSg200 and ACTIVSg2000) and weaker dependency between attributes and classes, PH induced by the graph properties is more important than attribute-induced PH, while for denser, more homogeneous graphs as IEEE 118-bus, attribute-based filtration is the key. These insights echo findings in real citation networks.

%\begin{table}[t]
%\centering
%\fontsize{9pt}{10pt}
%###\footnotesize
%###\setlength\tabcolsep{1.5pt}
%\begin{tabular}{lccc}
%\toprule
%\multicolumn{1}{l}{\bf Method}  &{\bf Cora-ML}&{\bf Citeseer}&{\bf PubMed}
%\\ \hline 
%GAT&64.04 (0.63)&54.02 (0.48)& 65.80 (0.47)\\
%RGCN&65.24 (0.53)&59.72 (0.34)&69.12 (0.31)\\
%APPNP& 62.98 (0.38)&60.99 (0.32)&69.00 (0.55)\\
%LFGCN&67.51 (0.35)&60.32 (1.10)& 66.67 (0.53)\\
%TRI-GNN&\textbf{\normalsize70.69 (0.30)}&\textbf{\normalsize62.03 (0.80)} & \textbf{\normalsize70.01 (0.45)}\\
%\bottomrule
%\end{tabular}
%\caption{
%Semi-supervised node classification
%	under \textit{nettack} on citation networks. Average accuracy (\%) and standard deviation (\%) in ().
%	\label{nettack}}
%\end{table}

{\bf Ablation Study} Our TRI framework contains two key components: (i) encoding higher-order topological information via TIMR, (ii) STAN, i.e., updating the node features by aggregating information from subgraphs, topology, and neighborhood features. To glean a deeper insight into how different components help TRI-GNN to achieve highly competitive results, % on graphs with different types, 
we conduct experiments by removing individual component separately, namely (1) TRI-GNN\textsubscript{D} w/o TIMR (i.e., denoted by $\Omega$), (2) TRI-GNN\textsubscript{D} w/o STAN, and (3) TRI-GNN\textsubscript{D} with TIMR and STN (where STN represents updating the node features by only aggregating information from subgraphs and topology without neighborhood features). As Table~\ref{ablation_study_tri_gnn_d} shows, both TIMR $\Omega$ and STAN indeed yield significant performance improvements on all datasets. The ablation study on contribution of TIMR and STAN with TRI-GNN\textsubscript{A} are in Table~\ref{ablation_study} of Appendix C.3. The results show that all the components are indispensable.

\begin{table}[ht]
\caption{TRI-GNN\textsubscript{D} ablation study. Significance analysis is performed with one-sided two-sample $t$-test based on 50 runs; *, **, *** denote $p$-value $<$ 0.1, 0.05, 0.01 (i.e., significant, statistically significant, highly statistically significant). \label{ablation_study_tri_gnn_d}}
%\fontsize{9pt}{10pt}
%\footnotesize
\centering
\begin{tabular}{lrrrr}
\toprule
\textbf{Dataset}& \textbf{TRI-GNN\textsubscript{D}}& {\bf w/o $\Omega$}& \textbf{w/o STAN} &\textbf{with $\Omega$, STN}
\\ \hline 
ACTIVSg200&{\bf 86.18 (0.20)}& $^{***}$82.93 (1.32) & $^{***}$83.00 (3.17) &$^{***}$83.25 (1.58)\\
ACTIVSg500&{\bf 98.11 (0.43)}& $^{***}$93.71 (1.03) & $^{***}$94.31 (1.20)&$^{***}$93.69 (1.50)\\
Cora-ML&{\bf 84.98 (0.49)}& $^{***}$84.26 (0.77)& $^{***}$84.00 (0.33)& $^{***}$84.40 (0.56) \\
CiteSeer&{\bf 73.32 (0.48)} & $^{**}$73.11 (0.43) & 73.20 (0.55)& $^{*}$73.19 (0.47)\\
\bottomrule
\end{tabular}
\end{table}

{\bf Boundary Sensitivity} %In our experiments, 
We select the $\epsilon_1$ and $\epsilon_2$ values based on quantiles of Wasserstein distances between persistence diagrams (see Figure~\ref{wasserstein_distances}). The optimal choice of $\epsilon_1$ and $\epsilon_2$ can be obtained via cross-validation. For instance, the optimal quantile of $\epsilon_1$ and $\epsilon_2$ for ACTIVSg200 dataset is 0.55 and 2.50 respectively. We consider the lower and upper bounds for $\epsilon_1$ are 0.50 (minimum) and 0.05-quantile, respectively, and the lower and upper bounds for $\epsilon_2$ are 0.1-quantile and 6.74 (maximum), respectively. For $\epsilon_1$, we generate a sequence from 0.50 to 2.00 with increment of the sequence 0.05; for $\epsilon_2$, we generate a sequence from 2.50 to 6.74 with increment of the sequence 0.5. Based on cross-validation, we conduct experiments in different $\epsilon_1$ and $\epsilon_2$ combinations. Figures~\ref{epsilon_1} and~\ref{epsilon_2} show the performances with respect to different thresholds ($\epsilon_1$ and $\epsilon_2$) combinations (where $^\dagger$ denotes the $\epsilon_{i}$ (where $i \in \{1,2\}$) is not fixed). From the results in Figure~\ref{boundary_plot}, we find that $(\epsilon_1, \epsilon_2)$ around $(0.55, 2.50)$ tend to deliver the optimal performance (accuracy (\%)). The optimal results differ among graphs and depend on sparsity, label rates and graph higher-order properties. To better examine the boundary sensitivity, we further evaluate hyperparameters $\epsilon_1$ and $\epsilon_2$ on ACTIVSg500 in Appendix C.8.
\begin{figure}[ht!]
\begin{subfigure}{0.31\linewidth}
\centering
\includegraphics[width=1.\textwidth]{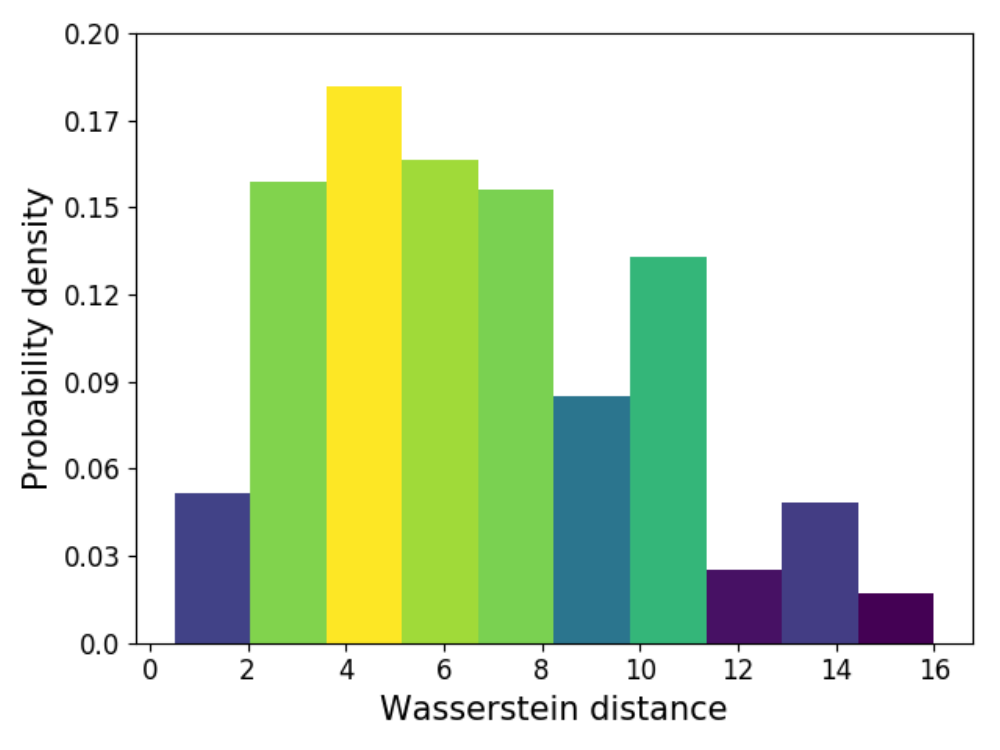}
\caption{Wasserstein distances.}
\label{wasserstein_distances}
\end{subfigure}%
\begin{subfigure}{0.31\linewidth}
\centering
\includegraphics[width=1.\textwidth]{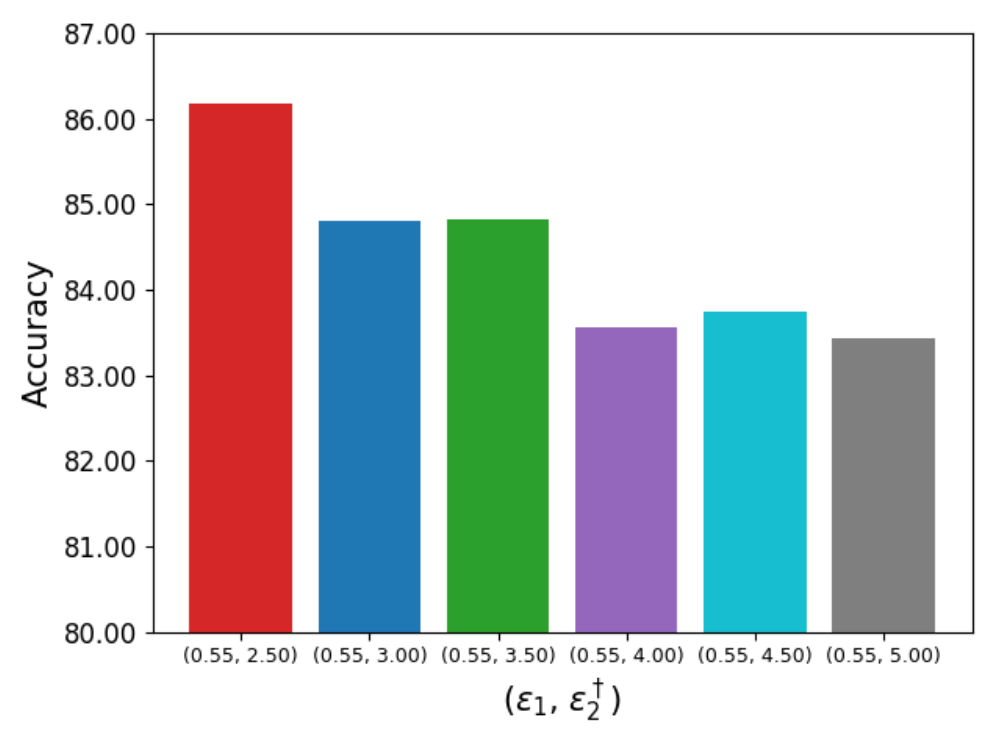}
\caption{$(\epsilon_1, \epsilon^{\dagger}_2)$.}
\label{epsilon_1}
\end{subfigure}
\begin{subfigure}{0.31\linewidth}
\centering
\includegraphics[width=1.\textwidth]{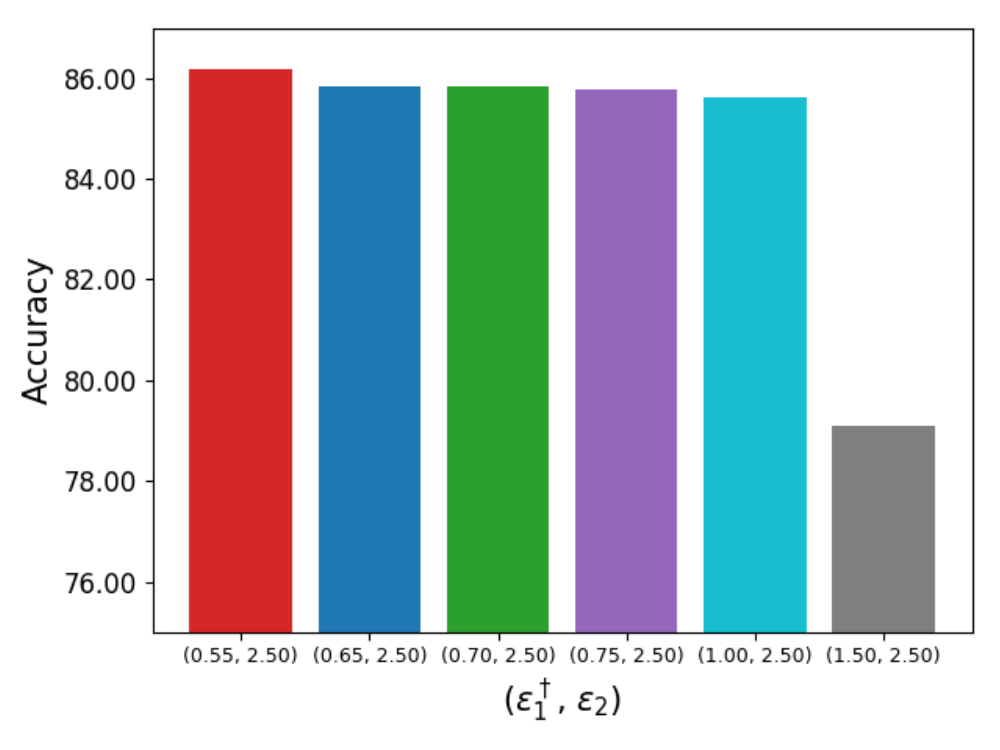}
\caption{$(\epsilon^{\dagger}_1, \epsilon_2)$.}
\label{epsilon_2}
\end{subfigure}
\caption{Hyperparameters $\epsilon_1$ and $\epsilon_2$ selection of TRI-GNN on ACTIVSg200 dataset.}
\label{boundary_plot}
\end{figure}

{\bf Robustness Analysis} To evaluate the model performance under graph perturbations, we test the robustness of the TRI-GNN framework under random attacks (here we present the results for TRI-GNN\textsubscript{A}, and analogous robustness for TRI-GNN\textsubscript{D} are in Appendix C.4.). Random attack randomly generates fake edges and injects them into the existing graph structure. The ratio of the number of fake edges to the number of existing edges varies from 0 to 100\%. As Figure~\ref{random_attack} shows, TRI-GNN\textsubscript{A} consistently outperforms all baselines on both citation and power grid networks under random attacks, delivering gains up to 10\% in highly noisy scenarios (see Cora-ML). These findings indicate that TRI-GNN\textsubscript{A} is again the most robust GNN node classifier under random attacks. By aggregating local topological information from node features and neighborhood substructures, TRI-GNN\textsubscript{A} can ``absorb'' noisy edges and, as a result, exhibits less sensitivity to graph perturbations and adversarial attacks. 
\begin{figure}[ht]
\centering
\includegraphics[width=\textwidth]{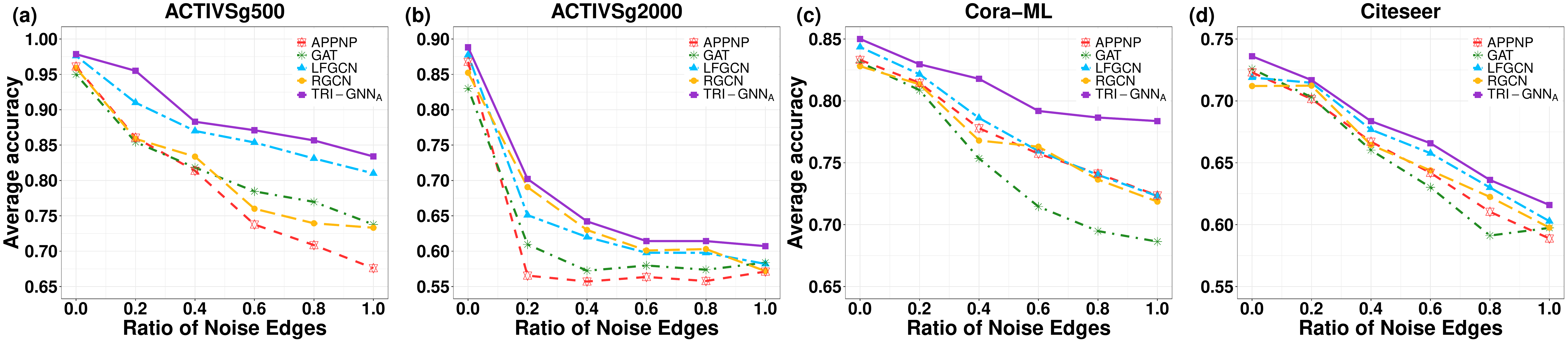}
\caption{Node classification accuracy under random attacks.\label{random_attack}}
\end{figure}

{\bf Computational Costs} 
In less extreme scenarios, the expected number of nodes in the $k$-hop neighborhood can be approximated by 
$\bar{d}^k$, where $\bar{d}$ is the mean degree of the graph and mathematical expectation is in terms of the node degree distribution. The computational complexity for Wasserstein distance between two persistence diagrams is $\mathcal{O}(M^3)$, where $M$ is the total number of barcodes in the both persistence diagrams. In the worst case scenario, if we use a very fine filtering function for persistence diagrams of $k$-hop neighborhoods, it would give at most $2\bar{d}^k$ barcodes (i.e., the number of barcodes in each $k$-hop neighborhood is at most $\bar{d}^k$ and we compare two $k$-hop neighborhoods). Considering we use the degree filtering function for our persistence diagrams, we expect to have much less ($\sim \bar{d}^k$) barcodes in the $0$-th persistence diagram of a $k$-hop neighborhood by using the technique in~\cite{kanari2020trees}. Hence, the overall computational complexity would be of stochastic order $\mathcal{O}(N^2\bar{d}^{\frac{3k}{2}})$. It may be costly to run PH for denser graphs (currently, computational complexity is the main roadblock for all PH-based methods on graphs) but as we noted earlier, for denser graphs we may use a degree filtration under lower $k$-hops which is still found to be very powerful even for small $k$. Table~\ref{running_time} in Appendix C.2 reports the average running time of PD generation and training time per epoch of our TRI-GNN model on all datasets.

%Running time of TRI-GNN and PEGN-RC of~\cite{zhao2020persistence} is comparable, e.g., on the ACTIVSg500 dataset, TRI-GNN (139.77 sec) runs faster than PEGN-RC (105.09 sec). Scalability can be an issue, as in most PH studies on graphs. For very dense graphs, degree filtration is very powerful even for smaller $k$-hops. Further reduction can be done via landmarking. More details on computational costs are in Table IV Appendix C.

\section{Conclusion}
We have proposed a novel perspective to node classification with GNN, based on the TDA concepts invoked within each local node neighborhood. The new TRI approach has shown to deliver highly competitive results on clean graphs and substantial improvements in robustness against graph perturbations. 
In the future, we plan to apply TRI-GNN as a part of structured graph pooling.

\section*{Societal Impact and Limitations}

Among the major limitations we currently see is probable existence of the bias of the proposed node classification approach due to potentially unbalanced population groups~\cite{barocas-hardt-narayanan}. Indeed, in the context of social data analysis, given that we analyze peer neighborhoods which themselves tend to be largely formed by the same users as the target node and hence are likely to be dominated by the abundant group, such bias is very likely to exist and to be non-negligible. Mitigating it is a problem on its own but some ideas include using multiple filtration functions based on different node attributes so the node neighborhoods are made more diverse and are no longer dominated by a specific group or removing some potentially sensitive node attributes (e.g., gender) completely from the study (which has its own pros and cons in terms of accuracy).

\clearpage

\section*{Acknowledgements}

This material is based upon work sponsored by 
the National Science Foundation under award numbers ECCS 2039701,
INTERN supplement for ECCS 1824716, DMS 1925346 and the Department of the Navy, Office of Naval Research under ONR award number N000142112226. Any opinions, findings, and
conclusions or recommendations expressed in this material are
those of the author(s) and do not necessarily reflect the views
of the Office of Naval Research.
The authors are also grateful to the NeurIPS anonymous reviewers for many insightful suggestions and engaging discussion which improved clarity of the manuscript and highlighted new open research directions.

%%%%%%%%%%%%%%%%%%%%%%%%%%%%%%%%%%%%%%%%%%%%%%%%%%%%%%%%%%%%

%%%%%%%%%%%%%%%%%%%%%%%%%%%%%%%%%%%%%%%%%%%%%%%%%%%%%%%%%%%%

%\clearpage

{\small
\bibliographystyle{plain}
\bibliography{TRI_GNN_ref}

\begin{thebibliography}{10}

\bibitem{abbe2020graph}
Emmanuel Abbe, Enric Boix-Adsera, Peter Ralli, and Colin Sandon.
\newblock Graph powering and spectral robustness.
\newblock {\em SIAM Journal on Mathematics of Data Science}, 2(1):132--157,
  2020.

\bibitem{abu2020n}
Sami Abu-El-Haija, Amol Kapoor, Bryan Perozzi, and Joonseok Lee.
\newblock N-{GCN}: Multi-scale graph convolution for semi-supervised node
  classification.
\newblock In {\em UAI}, pages 841--851, 2020.

\bibitem{abu2019mixhop}
Sami Abu-El-Haija, Bryan Perozzi, Amol Kapoor, Nazanin Alipourfard, Kristina
  Lerman, Hrayr Harutyunyan, Greg~Ver Steeg, and Aram Galstyan.
\newblock Mixhop: Higher-order graph convolutional architectures via sparsified
  neighborhood mixing.
\newblock In {\em ICML}, pages 21--29, 2019.

\bibitem{avrachenkov2020almost}
Konstantin Avrachenkov and Maximilien Dreveton.
\newblock Almost exact recovery in noisy semi-supervised learning.
\newblock {\em arXiv:2007.14717}, 2020.

\bibitem{barocas-hardt-narayanan}
Solon Barocas, Moritz Hardt, and Arvind Narayanan.
\newblock {\em Fairness and Machine Learning}.
\newblock fairmlbook.org, 2021.
\newblock \url{http://www.fairmlbook.org}.

\bibitem{bianchi2019graph}
Filippo~Maria Bianchi, Daniele Grattarola, Lorenzo Livi, and Cesare Alippi.
\newblock Graph neural networks with convolutional arma filters.
\newblock {\em TPAMI}, 2021.

\bibitem{birchfield2016statistical}
Adam~B Birchfield, Kathleen~M Gegner, Ti~Xu, Komal~S Shetye, and Thomas~J
  Overbye.
\newblock Statistical considerations in the creation of realistic synthetic
  power grids for geomagnetic disturbance studies.
\newblock {\em Transactions on Power Systems}, 32(2):1502--1510, 2016.

\bibitem{birchfield2016grid}
Adam~B Birchfield, Ti~Xu, Kathleen~M Gegner, Komal~S Shetye, and Thomas~J
  Overbye.
\newblock Grid structural characteristics as validation criteria for synthetic
  networks.
\newblock {\em Transactions on Power Systems}, 32(4):3258--3265, 2016.

\bibitem{bronstein2017geometric}
Michael~M Bronstein, Joan Bruna, Yann LeCun, Arthur Szlam, and Pierre
  Vandergheynst.
\newblock Geometric deep learning: going beyond euclidean data.
\newblock {\em Signal Processing Magazine}, 34(4):18--42, 2017.

\bibitem{bruna2013spectral}
Joan Bruna, Wojciech Zaremba, Arthur Szlam, and Yann LeCun.
\newblock Spectral networks and locally connected networks on graphs.
\newblock In {\em ICLR}, 2014.

\bibitem{carlsson2014topological}
Gunnar Carlsson.
\newblock Topological pattern recognition for point cloud data.
\newblock {\em Acta Numerica}, 23:289--368, 2014.

\bibitem{carriere2020perslay}
Mathieu Carri{\`e}re, Fr{\'e}d{\'e}ric Chazal, Yuichi Ike, Th{\'e}o Lacombe,
  Martin Royer, and Yuhei Umeda.
\newblock Perslay: A neural network layer for persistence diagrams and new
  graph topological signatures.
\newblock In {\em AISTATS}, pages 2786--2796, 2020.

\bibitem{chazal2017introduction}
Fr{\'e}d{\'e}ric Chazal and Bertrand Michel.
\newblock An introduction to topological data analysis: fundamental and
  practical aspects for data scientists.
\newblock {\em Frontiers in Artificial Intelligence}, 4, 2021.

\bibitem{chen2020measuring}
Deli Chen, Yankai Lin, Wei Li, Peng Li, Jie Zhou, and Xu~Sun.
\newblock Measuring and relieving the over-smoothing problem for graph neural
  networks from the topological view.
\newblock In {\em AAAI}, pages 3438--3445, 2020.

\bibitem{LFGCN}
Yuzhou Chen, Yulia~R Gel, and Konstantin Avrachenkov.
\newblock {LFGCN}: Levitating over graphs with levy flights.
\newblock In {\em ICDM}, pages 960--965. IEEE, 2020.

\bibitem{chen2018deep}
Yuzhou Chen, Yulia~R Gel, Vyacheslav Lyubchich, and Todd Winship.
\newblock Deep ensemble classifiers and peer effects analysis for churn
  forecasting in retail banking.
\newblock In {\em PaKDD}, pages 373--385, 2018.

\bibitem{chen2020bridging}
Zhiqian Chen, Fanglan Chen, Lei Zhang, Taoran Ji, Kaiqun Fu, Liang Zhao, Feng
  Chen, and Chang-Tien Lu.
\newblock Bridging the gap between spatial and spectral domains: A survey on
  graph neural networks.
\newblock {\em arXiv:2002.11867}, 2020.

\bibitem{dabhi2020nodenet}
Shrey Dabhi and Manojkumar Parmar.
\newblock {NodeNet}: A graph regularised neural network for node
  classification.
\newblock {\em arXiv preprint arXiv:2006.09022}, 2020.

\bibitem{dai2018adversarial}
Hanjun Dai, Hui Li, Tian Tian, Xin Huang, Lin Wang, Jun Zhu, and Le~Song.
\newblock Adversarial attack on graph structured data.
\newblock In {\em ICML}, pages 1115--1124, 2018.

\bibitem{defferrard2016convolutional}
Micha{\"e}l Defferrard, Xavier Bresson, and Pierre Vandergheynst.
\newblock Convolutional neural networks on graphs with fast localized spectral
  filtering.
\newblock In {\em NeurIPS}, pages 3844--3852, 2016.

\bibitem{derr2020epidemic}
Tyler Derr, Yao Ma, Wenqi Fan, Xiaorui Liu, Charu Aggarwal, and Jiliang Tang.
\newblock Epidemic graph convolutional network.
\newblock In {\em WSDM}, pages 160--168, 2020.

\bibitem{deza2009encyclopedia}
Michel~Marie Deza and Elena Deza.
\newblock Encyclopedia of distances.
\newblock In {\em Encyclopedia of Distances}, pages 1--583. Springer, 2009.

\bibitem{edelsbrunner2010computational}
Herbert Edelsbrunner and John Harer.
\newblock {\em Computational topology: an introduction}.
\newblock American Mathematical Soc., 2010.

\bibitem{gao2018large}
Hongyang Gao, Zhengyang Wang, and Shuiwang Ji.
\newblock Large-scale learnable graph convolutional networks.
\newblock In {\em ACM SIGKDD}, pages 1416--1424, 2018.

\bibitem{gegner2016methodology}
Kathleen~M Gegner, Adam~B Birchfield, Ti~Xu, Komal~S Shetye, and Thomas~J
  Overbye.
\newblock A methodology for the creation of geographically realistic synthetic
  power flow models.
\newblock In {\em PECI}, pages 1--6. IEEE, 2016.

\bibitem{hofer2019learning}
Christoph~D Hofer, Roland Kwitt, and Marc Niethammer.
\newblock Learning representations of persistence barcodes.
\newblock {\em JMLR}, 20(126):1--45, 2019.

\bibitem{jin2019power}
Ming Jin, Heng Chang, Wenwu Zhu, and Somayeh Sojoudi.
\newblock Power up! robust graph convolutional network via graph powering.
\newblock In {\em AAAI}, 2021.

\bibitem{jin2020adversarial}
Wei Jin, Yaxing Li, Han Xu, Yiqi Wang, Shuiwang Ji, Charu Aggarwal, and Jiliang
  Tang.
\newblock Adversarial attacks and defenses on graphs: A review, a tool and
  empirical studies.
\newblock {\em ACM SIGKDD Explorations Newsletter}, page 19–34, 2021.

\bibitem{jin2020graph}
Wei Jin, Yao Ma, Xiaorui Liu, Xianfeng Tang, Suhang Wang, and Jiliang Tang.
\newblock Graph structure learning for robust graph neural networks.
\newblock In {\em ACM SIGKDD}, 2020.

\bibitem{joseph2016impact}
Antony Joseph and Bin Yu.
\newblock Impact of regularization on spectral clustering.
\newblock {\em The Annals of Statistics}, 44(4):1765--1791, 2016.

\bibitem{kanari2020trees}
Lida Kanari, Ad{\'e}lie Garin, and Kathryn Hess.
\newblock From trees to barcodes and back again: theoretical and statistical
  perspectives.
\newblock {\em Algorithms}, 13(12):335, 2020.

\bibitem{kipf2017}
Thomas~N Kipf and Max Welling.
\newblock Semi-supervised classification with graph convolutional networks.
\newblock In {\em ICLR}, 2017.

\bibitem{klicpera2018predict}
Johannes Klicpera, Aleksandar Bojchevski, and Stephan G{\"u}nnemann.
\newblock Predict then propagate: Graph neural networks meet personalized
  pagerank.
\newblock In {\em ICLR}, 2019.

\bibitem{kyriakis2021learning}
Panagiotis Kyriakis, Iordanis Fostiropoulos, and Paul Bogdan.
\newblock Learning hyperbolic representations of topological features.
\newblock In {\em ICLR}, 2021.

\bibitem{le2018persistence}
Tam Le and Makoto Yamada.
\newblock Persistence fisher kernel: A riemannian manifold kernel for
  persistence diagrams.
\newblock In {\em NeurIPS}, pages 10007--10018, 2018.

\bibitem{liu2020towards}
Meng Liu, Hongyang Gao, and Shuiwang Ji.
\newblock Towards deeper graph neural networks.
\newblock In {\em ACM SIGKDD}, pages 338--348, 2020.

\bibitem{monti2018motifnet}
Federico Monti, Karl Otness, and Michael~M Bronstein.
\newblock Motifnet: a motif-based graph convolutional network for directed
  graphs.
\newblock In {\em DSW}, pages 225--228. IEEE, 2018.

\bibitem{mossel2016local}
Elchanan Mossel and Jiaming Xu.
\newblock Local algorithms for block models with side information.
\newblock In {\em ITCS}, pages 71--80, 2016.

\bibitem{munch2017user}
Elizabeth Munch.
\newblock A user’s guide to topological data analysis.
\newblock {\em Journal of Learning Analytics}, 4(2):47--61, 2017.

\bibitem{nandanwar2016structural}
Sharad Nandanwar and M~Narasimha Murty.
\newblock Structural neighborhood based classification of nodes in a network.
\newblock In {\em ACM SIGKDD}, pages 1085--1094, 2016.

\bibitem{qu2019gmnn}
Meng Qu, Yoshua Bengio, and Jian Tang.
\newblock {GMNN}: Graph markov neural networks.
\newblock In {\em ICML}, pages 5241--5250, 2019.

\bibitem{rieck2019persistent}
Bastian Rieck, Christian Bock, and Karsten Borgwardt.
\newblock A persistent weisfeiler-lehman procedure for graph classification.
\newblock In {\em ICML}, pages 5448--5458, 2019.

\bibitem{rong2019dropedge}
Yu~Rong, Wenbing Huang, Tingyang Xu, and Junzhou Huang.
\newblock Dropedge: Towards deep graph convolutional networks on node
  classification.
\newblock In {\em ICLR}, 2019.

\bibitem{sen2008collective}
Prithviraj Sen, Galileo Namata, Mustafa Bilgic, Lise Getoor, Brian Galligher,
  and Tina Eliassi-Rad.
\newblock Collective classification in network data.
\newblock {\em AI Magazine}, 29(3):93--93, 2008.

\bibitem{sun2020fisher}
Ke~Sun, Piotr Koniusz, and Zhen Wang.
\newblock Fisher-bures adversary graph convolutional networks.
\newblock In {\em UAI}, pages 465--475. PMLR, 2020.

\bibitem{sun2018adversarial}
Lichao Sun, Yingtong Dou, Carl Yang, Ji~Wang, Philip~S Yu, and Bo~Li.
\newblock Adversarial attack and defense on graph data: A survey.
\newblock {\em arXiv:1812.10528}, 2020.

\bibitem{sun2020non}
Yiwei Sun, Suhang Wang, Xianfeng Tang, Tsung-Yu Hsieh, and Vasant Honavar.
\newblock Non-target-specific node injection attacks on graph neural networks:
  A hierarchical reinforcement learning approach.
\newblock In {\em WWW}, volume~3, 2020.

\bibitem{sydney2013optimizing}
Ali Sydney, Caterina Scoglio, and Don Gruenbacher.
\newblock Optimizing algebraic connectivity by edge rewiring.
\newblock {\em Applied Mathematics and Computation}, 219(10):5465--5479, 2013.

\bibitem{togninalli2019wasserstein}
Matteo Togninalli, Elisabetta Ghisu, Felipe Llinares-L{\'o}pez, Bastian Rieck,
  and Karsten Borgwardt.
\newblock Wasserstein weisfeiler-lehman graph kernels.
\newblock In {\em NeurIPS}, pages 6439--6449, 2019.

\bibitem{tudisco2020nonlinear}
Francesco Tudisco, Austin~R Benson, and Konstantin Prokopchik.
\newblock Nonlinear higher-order label spreading.
\newblock In {\em WWW}, pages 2402--2413, 2021.

\bibitem{velivckovic2017graph}
P.~Veli{\v{c}}kovi{\'c}, G.~Cucurull, A.~Casanova, A.~Romero, P.~Lio, and
  Y.~Bengio.
\newblock Graph attention networks.
\newblock In {\em ICLR}, 2018.

\bibitem{wang2020unifying}
Hongwei Wang and Jure Leskovec.
\newblock Unifying graph convolutional neural networks and label propagation.
\newblock {\em arXiv preprint arXiv:2002.06755}, 2020.

\bibitem{wang2019adversarial}
Shen Wang, Zhengzhang Chen, Jingchao Ni, Xiao Yu, Zhichun Li, Haifeng Chen, and
  Philip~S Yu.
\newblock Adversarial defense framework for graph neural network.
\newblock {\em arXiv:1905.03679}, 2019.

\bibitem{wijesinghe2019dfnets}
Asiri Wijesinghe and Qing Wang.
\newblock {DFNets}: Spectral cnns for graphs with feedback-looped filters.
\newblock In {\em NeurIPS}, pages 6009--6020, 2019.

\bibitem{wu2019adversarial}
Huijun Wu, Chen Wang, Yuriy Tyshetskiy, Andrew Docherty, Kai Lu, and Liming
  Zhu.
\newblock Adversarial examples on graph data: Deep insights into attack and
  defense.
\newblock In {\em IJCAI}, 2019.

\bibitem{wu2020comprehensive}
Zonghan Wu, Shirui Pan, Fengwen Chen, Guodong Long, Chengqi Zhang, and S~Yu
  Philip.
\newblock A comprehensive survey on graph neural networks.
\newblock {\em TNNLS}, 2020.

\bibitem{yang2019spagan}
Yiding Yang, Xinchao Wang, Mingli Song, Junsong Yuan, and Dacheng Tao.
\newblock {SPAGAN}: shortest path graph attention network.
\newblock In {\em IJCAI}, pages 4099--4105, 2019.

\bibitem{yuvaraj2021topological}
Monisha Yuvaraj, Asim~K Dey, Vyacheslav Lyubchich, Yulia~R Gel, and H~Vincent
  Poor.
\newblock Topological clustering of multilayer networks.
\newblock {\em Proceedings of the National Academy of Sciences}, 118(21), 2021.

\bibitem{zhang2020deep}
Ziwei Zhang, Peng Cui, and Wenwu Zhu.
\newblock Deep learning on graphs: A survey.
\newblock {\em TKDE}, 2020.

\bibitem{zhao2019learning}
Qi~Zhao and Yusu Wang.
\newblock Learning metrics for persistence-based summaries and applications for
  graph classification.
\newblock In {\em NeurIPS}, pages 9859--9870, 2019.

\bibitem{zhao2020persistence}
Qi~Zhao, Ze~Ye, Chao Chen, and Yusu Wang.
\newblock Persistence enhanced graph neural network.
\newblock In {\em AISTATS}, pages 2896--2906. PMLR, 2020.

\bibitem{zhu2019robust}
Dingyuan Zhu, Ziwei Zhang, Peng Cui, and Wenwu Zhu.
\newblock Robust graph convolutional networks against adversarial attacks.
\newblock In {\em ACM SIGKDD}, pages 1399--1407, 2019.

\bibitem{zomorodian2005computing}
Afra Zomorodian and Gunnar Carlsson.
\newblock Computing persistent homology.
\newblock {\em Discrete \& Computational Geometry}, 33(2):249--274, 2005.

\end{thebibliography}
}

\clearpage
%\appendix
%\appendixpage
%\input{appendix} %YC removed it

\section*{Checklist}

\begin{enumerate}

\item For all authors...
\begin{enumerate}
  \item Do the main claims made in the abstract and introduction accurately reflect the paper's contributions and scope?
    \answerYes{}
  \item Did you describe the limitations of your work?
    \answerYes{See Section 5.2, scalability}.
  \item Did you discuss any potential negative societal impacts of your work?
    \answerNA{}
  \item Have you read the ethics review guidelines and ensured that your paper conforms to them?
    \answerYes{}
\end{enumerate}

\item If you are including theoretical results...
\begin{enumerate}
  \item Did you state the full set of assumptions of all theoretical results?
    \answerYes{See Section~4.2}.
	\item Did you include complete proofs of all theoretical results?
    \answerYes{See Supplementary Material, Appendix~A.}
\end{enumerate}

\item If you ran experiments...
\begin{enumerate}
  \item Did you include the code, data, and instructions needed to reproduce the main experimental results (either in the supplemental material or as a URL)?
    \answerYes{See Section~5 and Appendix~C. Source code and data are publicly available at \url{https://github.com/TRI-GNN/TRI-GNN.git}}.
  \item Did you specify all the training details (e.g., data splits, hyperparameters, how they were chosen)?
    \answerYes{See Section~5 and Appendix~C.}
	\item Did you report error bars (e.g., with respect to the random seed after running experiments multiple times)?
    \answerYes{See Section~5 and results on statistical hypothesis testing.}
	\item Did you include the total amount of compute and the type of resources used (e.g., type of GPUs, internal cluster, or cloud provider)?
    \answerYes{See Section~5 and Appendix~C.}
\end{enumerate}

\item If you are using existing assets (e.g., code, data, models) or curating/releasing new assets...
\begin{enumerate}
  \item If your work uses existing assets, did you cite the creators?
    \answerYes{}
  \item Did you mention the license of the assets?
    \answerYes{}
  \item Did you include any new assets either in the supplemental material or as a URL?
    \answerYes{}
  \item Did you discuss whether and how consent was obtained from people whose data you're using/curating?
    \answerYes{}
  \item Did you discuss whether the data you are using/curating contains personally identifiable information or offensive content?
    \answerNA{}
\end{enumerate}

\item If you used crowdsourcing or conducted research with human subjects...
\begin{enumerate}
  \item Did you include the full text of instructions given to participants and screenshots, if applicable?
    \answerNA{}
  \item Did you describe any potential participant risks, with links to Institutional Review Board (IRB) approvals, if applicable?
    \answerNA{}
  \item Did you include the estimated hourly wage paid to participants and the total amount spent on participant compensation?
    \answerNA{}
\end{enumerate}

\end{enumerate}

\clearpage
\appendix
\section{Proof of Stability of Average Degree}\label{app:proof}

	In this part, we prove the following stability result. We use the notation from the paper.

	\begin{theorem}[Stability of Average Degree] Let $\G^+$ and $\G^-$ be two graphs of same size and order. Let $T^\pm$ be the TIMR graph induced by $\G^\pm$, with
		thresholds $\epsilon_1$ and $\epsilon_2$. Let $\alpha^\pm$ be the average degree of $T^\pm$. Then, there exists a constant $K(\epsilon_1,\epsilon_2)>0$ such that  $$|\alpha^+-\alpha^-|\leq K(\epsilon_1,\epsilon_2)\mathfrak{D}^k(\G^+,\G^-).$$
	\end{theorem}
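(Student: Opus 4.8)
The plan is to fix a bijection $\varphi\colon V^+\to V^-$ attaining the minimum in $\mathfrak{D}^k(\G^+,\G^-)$ and to compare the two TIMR graphs vertex by vertex along $\varphi$. Writing $\alpha^\pm=\tfrac1N\sum_u\deg_{T^\pm}(u)$ and re-indexing the $T^-$ sum through $\varphi$, I would start from
\[
|\alpha^+-\alpha^-|\;\le\;\frac1N\sum_{u^+,v^+}\bigl|T^+_{u^+v^+}-T^-_{\varphi(u^+)\varphi(v^+)}\bigr|,
\]
where each summand lies in $\{0,1\}$ and is nonzero exactly for the pairs whose TIMR edge decision flips when one passes from $\G^+$ to $\G^-$. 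By the defining Iverson brackets, this decision is a thresholded function of the $0$-dimensional neighborhood distance $d^\pm_{u^\pm v^\pm}:=d_{W,1}^{f}(\mathcal{G}^k_{u^\pm},\mathcal{G}^k_{v^\pm})$ (together with the retained original edge), so the problem reduces to counting crossings of the two thresholds $\epsilon_1,\epsilon_2$.

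The stability input I would use is the triangle inequality for the Wasserstein-$1$ metric on persistence diagrams. Applying it to the four diagrams attached to $u^+,\varphi(u^+),v^+,\varphi(v^+)$ gives
\[
\bigl|d^+_{u^+v^+}-d^-_{\varphi(u^+)\varphi(v^+)}\bigr|\;\le\;\mathbf{d}_k(u^+,\varphi(u^+))+\mathbf{d}_k(v^+,\varphi(v^+)),
\]
so a pair can flip its TIMR status only when $d^+_{u^+v^+}$ lies within $\mathbf{d}_k(u^+,\varphi(u^+))+\mathbf{d}_k(v^+,\varphi(v^+))$ of $\epsilon_1$ or $\epsilon_2$. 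In particular, endpoints whose matched neighborhoods have identical persistence diagrams never contribute.

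The hard part, and the source of the constant $K(\epsilon_1,\epsilon_2)$, is converting this metric statement into an honest count, since the Iverson brackets are discontinuous and a priori many pairs could sit arbitrarily close to a threshold. Here I would exploit that the filtration is the integer-valued degree function: the birth/death coordinates of $\mathcal{D}_0$ lie on the integer lattice, so the attainable values of $d^\pm$ form a discrete set, and there is a minimal gap $g(\epsilon_1,\epsilon_2)>0$ between that set and $\{\epsilon_1,\epsilon_2\}$ (assuming the thresholds are chosen off the lattice). Consequently any flip forces $\mathbf{d}_k(u^+,\varphi(u^+))+\mathbf{d}_k(v^+,\varphi(v^+))\ge g(\epsilon_1,\epsilon_2)$, which upgrades the $0/1$ summand to the Lipschitz-type bound $\bigl|T^+_{u^+v^+}-T^-_{\varphi(u^+)\varphi(v^+)}\bigr|\le g(\epsilon_1,\epsilon_2)^{-1}\bigl(\mathbf{d}_k(u^+,\varphi(u^+))+\mathbf{d}_k(v^+,\varphi(v^+))\bigr)$ with $K(\epsilon_1,\epsilon_2)=g(\epsilon_1,\epsilon_2)^{-1}$. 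Summing over all ordered pairs, each local distance $\mathbf{d}_k(w,\varphi(w))$ is counted $2N$ times, the factor $N$ cancels the $\tfrac1N$ prefactor, and the double sum collapses to $2K(\epsilon_1,\epsilon_2)\sum_w\mathbf{d}_k(w,\varphi(w))=2K(\epsilon_1,\epsilon_2)\,\mathfrak{D}^k(\G^+,\G^-)$, which is the claimed bound.

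The remaining bookkeeping concerns the pairs retained through an original edge (those with $\epsilon_1\le d<\epsilon_2$ and $e_{uv}\in\mathcal{E}$): for these I would use that $\G^+$ and $\G^-$ have the same size and order, so under $\varphi$ the total original-edge count is conserved and only its interaction with the thresholded window can change status, which is again absorbed by the same crossing estimate. Throughout, the single genuine obstacle is the discontinuity of the thresholding; once discreteness of the degree filtration supplies the gap $g(\epsilon_1,\epsilon_2)$, the rest is triangle inequalities and summation.
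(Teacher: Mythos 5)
Your two load-bearing ideas coincide with the paper's: (i) the optimal matching $\varphi$ plus the triangle inequality for $d_{W_1}$ between neighborhood diagrams (the paper's inequality $\mathbf{d}_k(\varphi(u^+),\varphi(v^+))\le 2\delta+\epsilon_1$ with $\delta=\mathfrak{D}^k(\G^+,\G^-)$), and (ii) discreteness of the degree-based filtration supplying a positive gap ($r_0$ in the paper, your $g(\epsilon_1,\epsilon_2)$) that tames the discontinuous thresholding. Where the paper runs the count through aggregate monotone functions $k^\pm(t)$ and a dichotomy ($2\delta<r_0$ forces no flips; $2\delta\ge r_0$ lets the trivial bound $M=m(m-1)/2$ be absorbed via $K_0=2M/r_0$), your per-pair Lipschitz upgrade summed over ordered pairs is a legitimate, and in fact sharper, refinement of that step.

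The genuine gap is in your opening decomposition, and your own ``remaining bookkeeping'' paragraph does not close it. The inequality $|\alpha^+-\alpha^-|\le \frac1N\sum_{u^+,v^+}\bigl|T^+_{u^+v^+}-T^-_{\varphi(u^+)\varphi(v^+)}\bigr|$ is true but its right-hand side is not $O(\mathfrak{D}^k)$: for a matched pair whose two distances agree and lie in the open window $(\epsilon_1,\epsilon_2)$, the TIMR status is exactly the original-edge indicator $\mathbbm{1}[e_{uv}\in\mathcal{E}^\pm]$, and $\varphi$, chosen only to minimize $\sum_u\mathbf{d}_k(u,\varphi(u))$, need not carry $\mathcal{E}^+$ to $\mathcal{E}^-$. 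Each such pair contributes a full $1$ with zero compensating crossing distance — e.g., if all matched pairwise distances agree and sit in the window while the (equally many) edges are placed differently, then $T^\pm=\G^\pm$, $\alpha^+=\alpha^-$, yet your bound's right side is of order $|\mathcal{E}|/N$ while $\mathfrak{D}^k$ can be arbitrarily small. So ``absorbed by the same crossing estimate'' is false: taking absolute values pairwise destroys exactly the cancellation (equal size and order) that makes the theorem true. The repair is to perform that cancellation \emph{before} estimating, which is precisely the paper's route: write $\alpha(T^\pm)=\alpha(\G^\pm)+\frac{2k^\pm}{m}$ (equivalently, split the degree sum into the $\mathbbm{1}[e_{uv}\in\mathcal{E}^\pm]$ part and the two distance-thresholded parts), cancel $\alpha(\G^+)=\alpha(\G^-)$ exactly, and only then apply your triangle-inequality-plus-gap argument to the remaining added/removed counts, which depend on distances alone; with that restructuring your per-pair accounting goes through and yields a constant of the form $K=2/g$. (The paper itself skirts the same rock — it assumes ``for simplicity'' that only edges are added and identifies the added-edge count with the count of pairs with $\mathbf{d}_k<\epsilon_1$ — but its global decomposition keeps the original-edge contribution out of the estimate, which is what your pairwise route lacks.)
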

	
	\begin{proof}  
		For simplicity, we assume only that we add edges to $\G^\pm$ to obtain $T^\pm$. The case for removing edges is similar. As $\G^+$ and $\G^-$ are of the same order, number of nodes $V(\G^\pm)=V(T^\pm)$, say $m$. Let $E(\G)$ represents the number of edges  in the graph $G$. Assume we added $k^\pm$ edges to $\G^\pm$ and get $T^\pm$, i.e. $E(T^\pm)-E(\G^\pm)=k^\pm$. Recall that average degree of a graph $\alpha(\G)={2E(\G)}/{V(\G)}$. Hence, 
		\begin{eqnarray*}
			\alpha^+=\alpha(T^+)&=&\frac{2E(T^+)}{m}  =\frac{2(E(\G^+)+k^+)}{m}  \\ &=&\alpha(\G^+)+\frac{2k^+}{m}.
		\end{eqnarray*}
		Similarly, $\alpha^-=\alpha(T^-)= \alpha(\G^-)+\frac{2k^-}{m}$. As $\G^+$ and $\G^-$ are of the same size, $$\bigl|\alpha^+-\alpha^-\bigr|=\biggl|\frac{2k^+}{m}-\frac{2k^-}{m}\biggr|=\frac{2}{m}|k^+-k^-|.$$ If we can bound this quantity in terms of $\mathfrak{D}^k(\G^+,\G^-)$, then the result follows. Notice that adding $k^+$ edges to $\G^+$ implies that there exist $k^+$ pairs of nodes $\{(u^+_1,v^+_1), (u^+_2,v^+_2), \ldots, (u^+_{k^+},v^+_{k^+})\}$ in $\G^+$ such that $\mathbf{d}_k(u^+_i,v^+_i)<\epsilon_1$. We claim that we can find suitable $K(\epsilon_1)>0$ such that $k^+\sim k^-$ delivering the desired inequality.
		
		Assume $\mathfrak{D}^k(\G^+,\G^-)=\delta$ where $\delta>0$. Let $\varphi:\G^+\to \G^-$ be the best matching for this metric. Notice that %$$\mathbf{d}_k(\varphi(u^+),\varphi(v^+)) \leq %\mathbf{d}_k(\varphi(u^+),u^+)+\mathbf{d}_k(u^+,v^+)+\mathbf{d}_k(v^+,\varphi(v^+))$$
		%\begin{multline*}
			$$\mathbf{d}_k(\varphi(u^+),\varphi(v^+))\leq \mathbf{d}_k(\varphi(u^+),u^+)
			+\mathbf{d}_k(u^+,v^+) 
			+\mathbf{d}_k(v^+,\varphi(v^+))$$
		%\end{multline*}
		By assumption, 
		\begin{eqnarray*}
			\mathbf{d}_k(u^+,\varphi(u^+)) \leq \mathfrak{D}^k(\G^+,\G^-)&=&\delta \\
			\mathbf{d}_k(v^+,\varphi(v^+)) \leq \mathfrak{D}^k(\G^+,\G^-)&=&\delta,
		\end{eqnarray*}
	
	which implies 
		\begin{equation} \label{eqn1}
		\mathbf{d}_k(\varphi(u^+),\varphi(v^+))\leq 2\delta+\epsilon_1.
		\end{equation}

		Let $k^+(t)$ be the number of pairs of points in $\G^+$ with PD distance $< t$, and define $k^-(t)$ likewise. We compare these two functions with respect to $\delta$, PD distance of $\G^+$ and $\G^-$. Notice that both functions are monotone nondecreasing, and since there are only finitely many values, they are both locally constant. As $\varphi(u^+),\varphi(v^+)$ are both nodes in $\G^-$, the inequality (\ref{eqn1}) shows $k^+(t)<k^-(2\delta+t)$, i.e., for each pair $u^+,v^+\in \G^+$ with $k$-neighborhoods of $u^+$ and $v^+$ are $t$-close, we will have the pair $\varphi(u^+),\varphi(v^+)\in \G^-$ whose $k$-neighborhoods are $(2\delta+t)$-close. Similarly, we have  $k^-(t)<k^+(2\delta+t)$. By taking $t=s-2\delta$, we have $k^+(s-2\delta)<k^-(s)$. Hence, assuming $k^+(t)>k^-(t)$ at $t$, this implies $k^+(t)-k^-(t)<k^+(t)-k^+(t-2\delta)$. Hence, bounding $\dfrac{k^+(t)-k^+(t-2\delta)}{\delta}$ by a constant $K(t)$ would suffice to finish the proof as $\delta$ is the PD distance appearing in the right hand side of the desired inequality. 

        Now, let $r_0$ be the minimum distance between the threshold pairs $\{(b,d)\}$ on the persistence diagram grid. Then, $d_k(u,v)\geq r_0$ for any $u,v \in \G^\pm$. Let $M=m\cdot(m-1)/2$ be the total number of pairs in $\G^\pm$, i.e. $k^\pm\leq M$. Let $K_0=2M/r_0$. Now note that if $2\delta<r_0$, $k^+(t)-k^-(t-2\delta)=0$ for any threshold $t$, as by assumption, both $k^\pm$ are locally constant for intervals of at least $r_0$ length. If $2\delta\geq r_0$, then $K_0\cdot \delta \geq M$.  As a result, since $k^+(t)-k^-(t)\leq M$ as $M$ is the total number of pairs in $\G^\pm$, the proof follows. 	\end{proof}

% 		which implies $$\mathbf{d}_k(\varphi(u^+),\varphi(v^+))\leq 2\delta+\epsilon_1.$$
% 		This means while the threshold $\epsilon_1$ controls $k^+$ for $G^+$, the threshold $2\delta+\epsilon_1$ controls $k^-$ for $G^-$. Therefore, as both graphs are same order, for a given $\epsilon_1$, we can find $K(\epsilon_1)$ such that $|\alpha^+-\alpha^-|$ is controlled by $\mathfrak{D}^k(\G^+,\G^-)$. For removing edges, derivations are verbatim with the corresponding $K(\epsilon_2)>0$.
% 		Finally, $K(\epsilon_1, \epsilon_2)=K(\epsilon_1)+ K(\epsilon_2)$, which concludes
% 		the proof.
% 	\end{proof}

%%%%%%%%%%%%%%%%%%%%%%%%%%%%%%%%%%%%%%%%%%%%%%%%%%%%%%%%
\section{More detailed information of TRI-GNN}
Figure~\ref{flowchart_framework} provides the overview of framework of TRI-GNN. For Figure~\ref{stan_framework}, we provide more detailed descriptions of STAN within the Figure caption.
\begin{figure}[ht]
\centering
\includegraphics[width=1.0\textwidth]{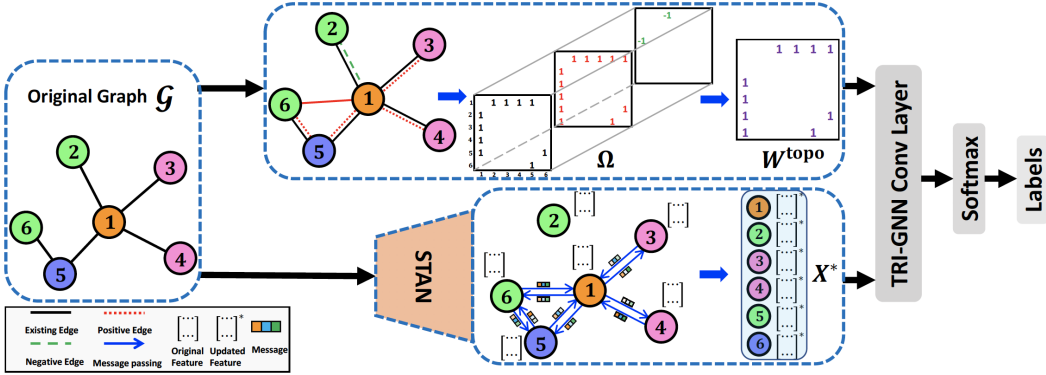}
\caption{%\footnotesize 
The framework of semi-supervised learning with TRI-GNN. Given the original graph $\mathcal{G}$, the upper part is the model architecture using topology-induced multigraph representation ($\Omega$) equipped with a set of multiedges (see Eq.~(1) in Definition 3 of the main manuscript) to obtain the \textit{joint topology-induced} adjacency matrix $W^{\textup{topo}}$. Here {\color{black} \bf edge colored black} represents the original edge in the graph $\mathcal{G}$ (i.e., $W$), {\color{red}{edge colored red}} represents \textit{positive topology-induced} edge (i.e., $W^{\textup{topo}^+}$), and {\color{green3}edge colored green} represents \textit{negative topology-induced} edge (i.e., $W^{\textup{topo}^-}$). The lower part is the model architecture using STAN to update node feature vectors (i.e., $X^{*}$) through weighted feature aggregation procedure based on topological distances. We then apply TRI-GNN convolutional layer and use the \textit{joint topology-induced} adjacency matrix $W^{\textup{topo}}$ and updated node feature vectors $X^{*}$ as input.\label{flowchart_framework}}
\end{figure}

\begin{figure}[ht]
\centering
\includegraphics[width=0.5\textwidth]{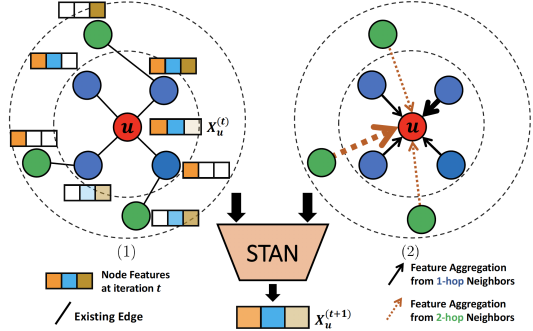}
\caption{
STAN for node feature vectors extension. The target node $u$ ({\color{red} red}) with 2-hop neighborhood, where four 1-hop neighbors ({\color{blue} blue}) and three 2-hop neighbors 
({\color{green3} green}). Each node is represented by a 3-component feature vector. STAN utilizes node features from 2-hop neighborhood and normalized reciprocal topological distances (i.e., $\hat{d}_{uv}$) between the target node and its 2-hop neighborhood to produce a new vector representation for $u$. (1) shows node feature vectors for all nodes in 2-hop neighborhood of $u$ at iteration $t$. (2) shows normalized reciprocal topological distances (which can be treated as edge weights) between 2-hop neighborhood and $u$, solid arrow indicates the weights during 1-hop neighborhood feature aggregation and dashed arrow indicates the weights during 2-hop neighborhood feature aggregation. After one iteration, STAN is applied to generate the feature vector for $u$ at iteration $t+1$. \label{stan_framework}
}
\end{figure}

\begin{figure}[ht]
\centering
\includegraphics[width=\textwidth]{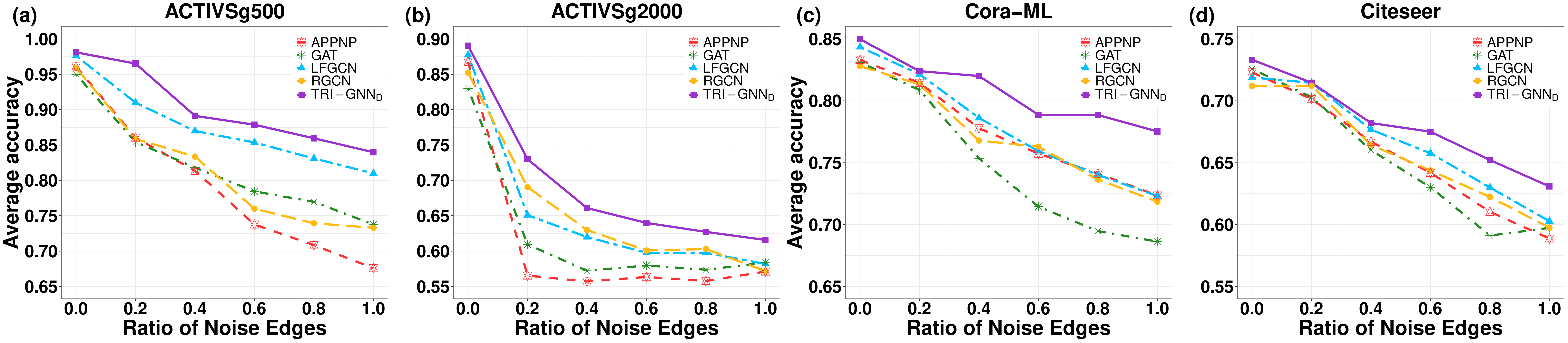}
\caption{Node classification accuracy of TRI-GNN\textsubscript{D} under random attacks.\label{random_attack_tri_gnn_d}}
\end{figure}

%%%%%%%%%%%%%%%%%%%%%%%%%%%%%%%%%%%%%%%%%%%%%%%%%%%%%%%%
\section{More details on experiments}
\subsection{Datasets}
\textbf{Undirected networks} Cora-ML (this Cora dataset consists of Machine Learning papers), Citeseer and PubMed are three standard citation networks benchmark datasets used for semi-supervised learning evaluation~\cite{sen2008collective}. In these citation networks, nodes represent publications, edges denote citation, the input feature matrix are bag of words and label matrix contain the class label of publication. We use the same data format as GCN, i.e., 20 labels per class in each citation network.

\textbf{Directed networks} We evaluate our method on four directed networks - IEEE 118-bus system (IEEE bus), ACTIVSg200, ACTIVSg500, and ACTIVSg2000~\cite{birchfield2016grid, birchfield2016statistical,gegner2016methodology}. For IEEE 118-bus system, we consider a (unweighted-directed) graph as a model for the IEEE 118-bus system where nodes represent units such as generators, loads and buses, and edges represent the transmission lines. The input features of power grid network are generator active power upper bound (PMAX) and real power demand (PD) obtained from MATPOWER case struct. For ACTIVSg200, ACTIVSg500, and ACTIVSg2000, we also treat them as unweighted directed power grid networks. In particular, the input features are: (i) real power demand (PD); (ii) reactive power demand (QD); (iii) voltage magnitude (VM); (iv) voltage angle (VA); (v) base voltage (BASE$\_$KV). For the power grid networks, we test them with $10\%$ label rate for training set, $20\%$ for validation and $70\%$ for test sets. Summary statistics of the data are summarized in Table~\ref{data_stat}.

\begin{table}
%\footnotesize
\centering
\caption{Dataset summary statistics.}
\label{data_stat}
\begin{tabular}{lccccc}
\toprule
\multicolumn{1}{l}{\bf Dataset}  &{\bf Vertices}&{\bf Edges}&{\bf Features}&{\bf Classes}&{\bf Label rate}
\\ \hline 
IEEE 118-Bus&118&175&2&3&0.100\\
ACTIVSg200&200&156&5&3&0.100\\
ACTIVSg500&500&292&5&3&0.100\\
ACTIVSg2000&2,000&1,917&5&3&0.100\\
Cora-ML&2,708&5,429&1,433&7&0.052\\
CiteSeer&3,327& 4,732&3,703&6&0.036\\
PubMed&19,717&44,338&500&3&0.003\\
\bottomrule
\end{tabular}
\end{table}

\subsection{Training Settings}
For all experiments, we train our model utilizing Adam optimizer with learning rate $lr_1 = 0.01$ for undirected networks and $lr_2 = 0.1$ for directed networks. We use the same random seeds across all models (the experiments are repeated 5 times, each with the same random seed). To prevent over-fitting, we consider both adding dropout layer before two graph convolutional layers and kernel regularizers ($\ell_2$) in each layer. For undirected networks, we set the parameters of baselines by using two graph convolutional layers with 16 hidden units, $\ell_2$ regularization term with coefficient $5 \times 10^{-4}$, and dropout probability $p_{drop.}$ of 0.5 (for ARMA, except for number of hidden units, the hyperparameters setting are significantly different from others). 
For directed networks, we consider using a two-layer network but with $n^{\text{SOA}}_h$ hidden units (where $n^{\text{SOA}}_h \in \{16, 32, 64, 128, 256\}$), learning rate $lr_2$ with 0.001, 0.5 dropout rate and $\ell_2$ regularization weight of $5 \times 10^{-4}$ for baselines (for MotifNet, except for number of hidden units, the hyperparameters setting are significantly different from others). The best hyperparameter configurations, including dropout rate $p_{drop.}$, optimal
number of hidden neurons $n^{\text{TRI-GNN}}_{h}$, regularization parameter $\mu$, the coefficient $\mathcal{R}$ (i.e., related to the number of power iteration steps), element-wise nonlinear
activation function $\sigma(\cdot)$, of TRI-GNN for each dataset by using standard grid search mechanism (the optimal kernel regularization weight $\ell_2$ always equal to $5 \times 10^{-4}$). In STAN module, for the weighting factor $\alpha(u)$, we choose the best setting for each model, where $\alpha(u)$ is searched in $\{0.0, 0.1, 0.2, 0.3, 0.4, 0.5\}$; for the number of STAN iterations $\mathcal{T}$, we search $\mathcal{T}$ from $\{0,1,2\}$. We also use a parallel structure by stacking multiple convolution based TRI-GNN layers to attenuate the noise issue. For the choice of $k$ in the $k$-hop neighborhood, (i) for citation networks: we search $k$ from $\{1,2,3\}$ (where $k_{\text{Cora-ML}} = 2$, $k_{\text{CiteSeer}} = 1$, and $k_{\text{PubMed}} = 1$) and (ii) for synthetic power networks: we search $k$ from $\{3,4,5\}$ (where $k_{\text{IEEE}} = 5$, $k_{\text{ACTIVSg200}} = 5$, $k_{\text{ACTIVSg500}} = 3$, and $k_{\text{ACTIVSg2000}} = 5$). For threshold hyperparameters of topology-induced multiedge construction, in practice, we select the $\epsilon_1$ and $\epsilon_2$ values based on quantiles of Wasserstein distances between persistence diagrams. The optimal choice of $\epsilon_1$ and $\epsilon_2$ can be obtained via cross-validation. Specifically, we first search $\epsilon_1$ from $\{0.05, 0.1, 0.15, 0.2\}$-quantiles. With the best selction of $\epsilon_1$, we then search $\epsilon_2$ from $\{0.1, 0.15, \cdots, 0.95\}$-quantiles. Note that some most recent studies of~\cite{rong2019dropedge,chen2020measuring} have shown that dropping out edges helps reducing over-smoothing and improves training efficiency. %In spirit of these findings, our experiments also consider randomly selecting a fraction of non-zero entries in  $W^{\textup{topo}^-}$ with a certain rate $p^{\textup{topo}^-} = 1\%$.

\textbf{Running Time}
We report results for the average time (in seconds) taken for PD generation from $k$-hop neighborhood subgraph (in our experiment, $k \in \{1,2,3,5\}$) and the average training time (in seconds) per epoch of TRI-GNN\textsubscript{A} for both undirected and directed networks on Tesla V100-SXM2-16GB.
\begin{table}[ht]
\centering
\caption{Complexity of TRI-GNN\textsubscript{A}: the average time (in seconds) to generate PD and training time per epoch.\label{running_time}}
\setlength\tabcolsep{1.5pt}
\begin{tabular}{lcc}
\toprule
{\textbf{Dataset}}& {\textbf{PD generation}} & {\textbf{TRI-GNN\textsubscript{A} (per epoch)}} \\
\midrule
IEEE 118-Bus & $5\times10^{-4}$\text{s} & 0.01s \\
ACTIVSg200 &$5\times10^{-4}$\text{s}  &0.01s   \\
ACTIVSg500 &  $1\times10^{-3}$\text{s}& 0.02s  \\
ACTIVSg2000 &  $1\times10^{-2}$\text{s} & 0.25s  \\\midrule
Cora-ML &  $7\times10^{-2}$\text{s} &  0.05s \\
CiteSeer &  $1\times10^{-2}$\text{s} & 0.35s  \\
PubMed &  $2\times10^{-1}$\text{s} & 0.30s  \\
\bottomrule
\end{tabular}
\end{table}

\subsection{Ablation Study of TRI-GNN$_\text{A}$} 
The extended ablation study with statistical significance (i.e., $z$-test between TRI-GNN\textsubscript{A} and baselines) is in Table~\ref{ablation_study}. Except for the PubMed (the reason maybe that the PubMed is a very sparse network), in the power grids and Cora-ML, all TRI-GNN components are highly statistical significant, but in CiteSeer individual contributions of TIMR is statistical significant, %the effect of neighborhood features is significant, 
while STAN is non-significant. This can be explained by CiteSeer highest sparsity and richest set of node attributes. Overall, TIMR is universally important.

\begin{table}[ht]
\caption{Ablation study of TRI-GNN\textsubscript{A} in accuracy (\%) and standard deviation (\%) in (); *, **, *** denote $p$-value $<$ 0.1, 0.05, 0.01 (i.e., significant, statistically significant, highly statistically significant). \label{ablation_study}}
\fontsize{9pt}{10pt}
%\footnotesize
\centering
\begin{tabular}{lrrr}
\toprule
& \multicolumn{1}{c}{\textbf{TRI-GNN\textsubscript{A}}}&\multicolumn{1}{c}{\textbf{TRI-GNN\textsubscript{A}}}&\multicolumn{1}{c}{\textbf{TRI-GNN\textsubscript{A}}}\\ 
\cline{2-4}
\textbf{Dataset}& \textbf{w/o $\Omega$}& \textbf{w/o STAN} &\textbf{with $\Omega$, STN}
\\ \hline 
IEEE 118-Bus& $^{*}$82.20 (1.68)&$^{*}$82.38 (2.00) & $^{*}$82.21 (1.88)\\
ACTIVSg200& $^{***}$82.75 (2.09)&$^{**}$84.56 (1.75)& $^{***}$82.80 (2.73)\\
ACTIVSg500& $^{*}$97.85 (0.56)&$^{*}$97.69 (0.63)&$^{*}$97.54 (0.72)\\
ACTIVSg2000&$^{*}$88.59 (0.60)&$^{*}$88.82 (0.56)& $^{*}$88.63 (0.65)\\
Cora-ML& $^{***}$84.33 (0.63)&$^{***}$84.63 (0.61)&$^{***}$84.42 (0.70)\\
CiteSeer& 73.25 (0.70)&$^{*}$73.10 (0.70)&$^{*}$73.10 (0.63)\\
PubMed& 79.77 (0.51)&79.71 (0.50)& 79.68 (0.63)\\
\bottomrule
\end{tabular}
\end{table}

\subsection{Random Attacks}
Here we present the results for TRI-GNN\textsubscript{D} under random attacks. As demonstrated in Figure~\ref{random_attack_tri_gnn_d}, we can see that the TRI-GNN\textsubscript{D} architecture indeed is capable to capture much richer local and global higher-order graph information. Particularly on Cora-ML, the performance of the 4 baselines drop rapidly as the ratio of perturbed edges increases, while TRI-GNN\textsubscript{D} successfully defends against the perturbations. Overall, the robustness of 
TRI-GNN\textsubscript{D} and 
TRI-GNN\textsubscript{A} (see Figure~4 in the main manuscript) are similar. These results indicate that the new TRI-GNN architecture with both types of the considered filtration functions is a highly competitive alternative for graph learning under adversarial perturbations.

\subsection{Diverse Set of TRI-GNN}
We experimented shorter training sets and
numbers of layers (see Table~\ref{diverse_set}). TRI-GNN sustains its competitiveness, yielding one of the best results even under 20\% reduction of the training set (compare Table~1 in the main manuscript). Higher order holes have not brought up any noticeable improvement as higher order homology are almost not met in relatively small node neighborhoods.

\begin{table}[ht]
%\captionsetup{font=scriptsize}
\caption{TRI-GNN\textsubscript{D} with different training set sizes and number of layers. Number of runs is 50. \label{diverse_set}}
\fontsize{9pt}{10pt}
\footnotesize
\centering
\begin{tabular}{lcccc}
\toprule
\multirow{2}{*}{\textbf{Cora-ML}} & \multicolumn{4}{c}{\textbf{Training set size}} \\
& \textbf{100\% Train}& \textbf{90\% Train}& \textbf{80\% Train} & \textbf{50\% Train}
\\ \hline 
Accuracy (\%) (std)&{\bf 84.98 (0.49)}& 84.15 (0.55)& 84.00 (0.64) &73.20 (0.50)\\\hline\hline
\multirow{2}{*}{\textbf{Cora-ML}} & \multicolumn{4}{c}{\textbf{Layers}} \\
& \textbf{2 layers}& \textbf{8 layers}& \textbf{16 layers}& \textbf{32 layers}
\\ \hline 
Accuracy (\%) (std)&{\bf 84.98 (0.49)}& 80.70 (0.75)& 78.06 (1.07) &72.87 (1.30)\\
\bottomrule
\end{tabular}
\end{table}

\subsection{Comparison with GCN-LPA, NodeNet, and DFNet-ATT}
Tables~\ref{comparison_result_a} and~\ref{comparison_result_b} compare the performance of TRI-GNN\textsubscript{D} to other state-of-the-art graph classification baselines, i.e., DFNet-ATT~\cite{wijesinghe2019dfnets}, GCN-LPA~\cite{wang2020unifying}, and NodeNet~\cite{dabhi2020nodenet}. On Cora-ML$^\dagger$, we follow the settings of NodeNet, i.e., we split the Cora-ML dataset into training (80\%) and test sets (20\%). We
can observe that TRI-GNN significantly outperforms GCN-LPA, NodeNet, and DFNet-ATT on these datasets.

\begin{table}[ht]
\centering
\caption{Average accuracy (\%) and standard deviation (\%) in () on Cora-ML, ACTIVSg200, and ACTIVSg500.\label{comparison_result_a}}
\begin{tabular}{lcc}
\toprule
\textbf{Method}&  \textbf{TRI-GNN\textsubscript{D}}& \textbf{NodeNet}\\\hline 
Cora-ML$^\dagger$ & {\bf  85.85 (0.71)} &84.03 (0.40)\\
ACTIVSg200&{\bf 86.18 (0.20)} &80.15 (0.91) \\
ACTIVSg500&{\bf 98.11 (0.43)} &95.00 (0.50) \\\bottomrule
\end{tabular}
\end{table}

\begin{table}[ht]
\centering
\caption{Average accuracy (\%) and standard deviation (\%) in () on Cora-ML and CiteSeer. \label{comparison_result_b}}
\begin{tabular}{lccc}
\toprule
\textbf{Method}&  \textbf{TRI-GNN\textsubscript{D}}& \textbf{GCN-LPA}&\textbf{DFNet-ATT}\\   \hline 
Cora-ML & {\bf  84.98 (0.49)} &81.68 (0.93) &84.65 (0.50)\\
CiteSeer&{\bf 73.32 (0.48)} &71.40 (0.60) &70.18 (0.71) \\\bottomrule
\end{tabular}
\end{table}

\subsection{Additional Results}
We also evaluate our methods on two relatively large datasets, i.e., ogbn-products and ogbn-arxiv. Table~\ref{large_dataset} below shows that our proposed TRI-GNN always outperforms state-of-the-art methods in terms of classification accuracy on large networks. The results also indicate that our proposed model is capable to achieve highly promising results on very large networks.
\begin{table}[ht]
\centering
\caption{Average accuracy (\%) and standard deviation (\%) in () on ogbn-products, and ogbn-arxiv. \label{large_dataset}}
\begin{tabular}{lccccc}
\toprule
\textbf{Method}&  \textbf{TRI-GNN\textsubscript{D}}& \textbf{GCN}&\textbf{GAT} & \textbf{RGCN } & \textbf{APPNP}\\ \hline 
ogbn-products&{\bf 84.00 (0.007)} & 78.87 (0.003) & 80.02 (0.001) & 81.35 (0.005) & 83.17 (0.006) \\
ogbn-arxiv&{\bf 74.30 (0.003)} & 72.18 (0.002) & 72.47 (0.002) & 72.97 (0.001) & 72.10 (0.003) \\\bottomrule
\end{tabular}
\end{table}

\subsection{Boundary Sensitivity on ACTIVSg500}
We also evaluate the boundary sensitivity of hyperparameters $\epsilon_1$ and $\epsilon_2$ on ACTIVSg500 dataset. The results are presented in Figure~\ref{boundary_plot_south_carolina}. We can observe that, compared with $\epsilon_2$ (i.e., removing existing edges based on topological similarity among two node neighborhoods), TRI-GNN is more sensitive to $\epsilon_1$ (i.e., adding edges If two nodes in $\mathcal{G}$ are topologically similar).
\begin{figure}[ht!]
\begin{subfigure}{0.31\linewidth}
\centering
\includegraphics[width=1.\textwidth]{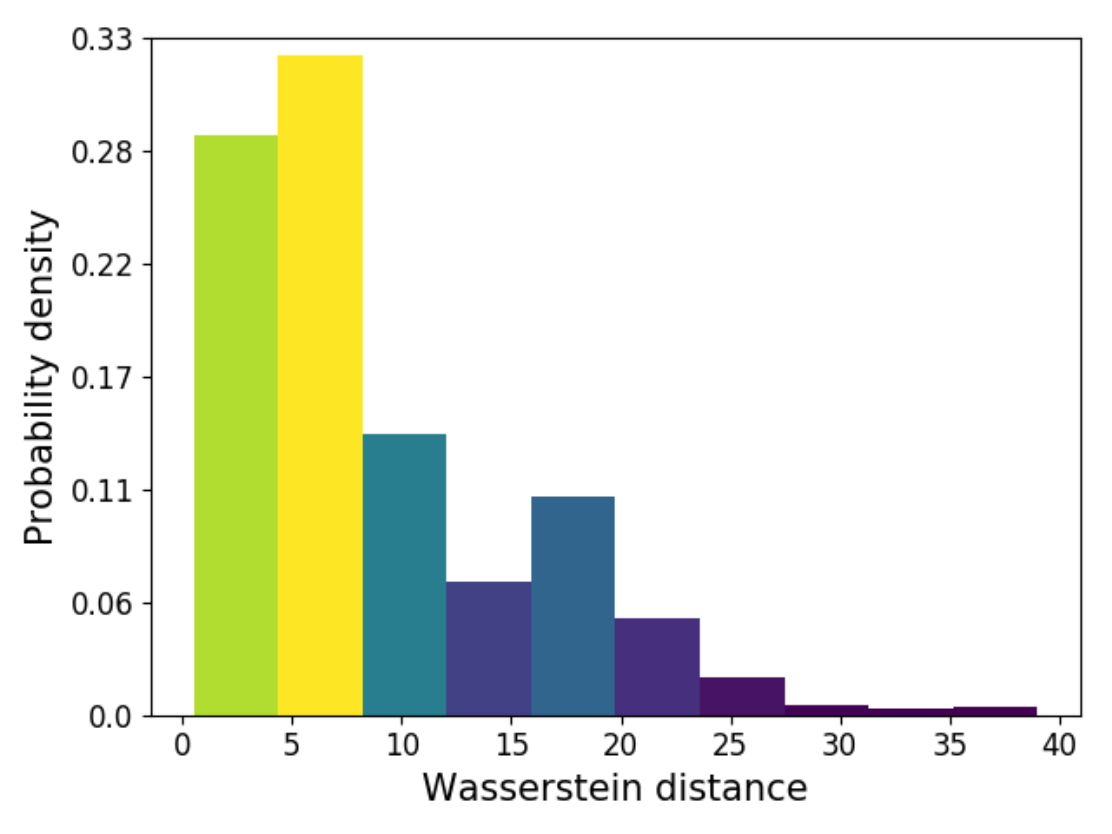}
\caption{Wasserstein distances.}
\label{wasserstein_distances_south_carolina}
\end{subfigure}%
\begin{subfigure}{0.31\linewidth}
\centering
\includegraphics[width=1.\textwidth]{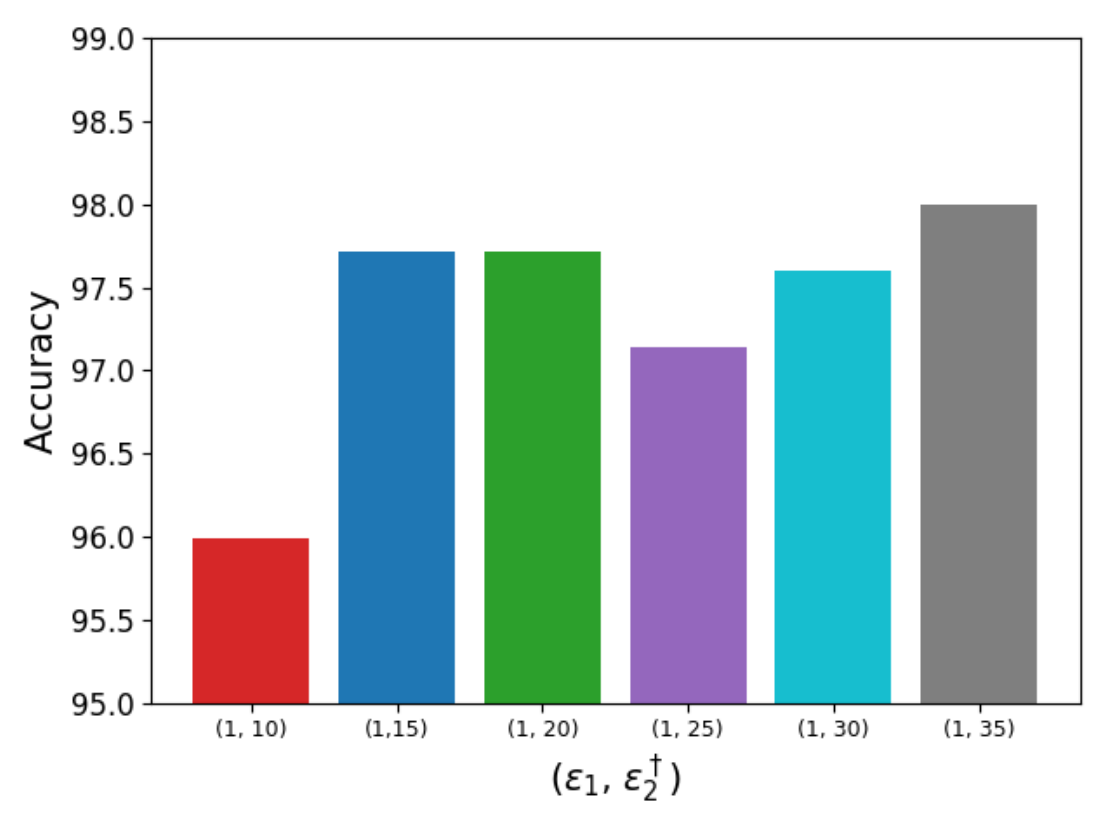}
\caption{$(\epsilon_1, \epsilon^{\dagger}_2)$.}
\label{epsilon_1_south_carolina}
\end{subfigure}
\begin{subfigure}{0.31\linewidth}
\centering
\includegraphics[width=1.\textwidth]{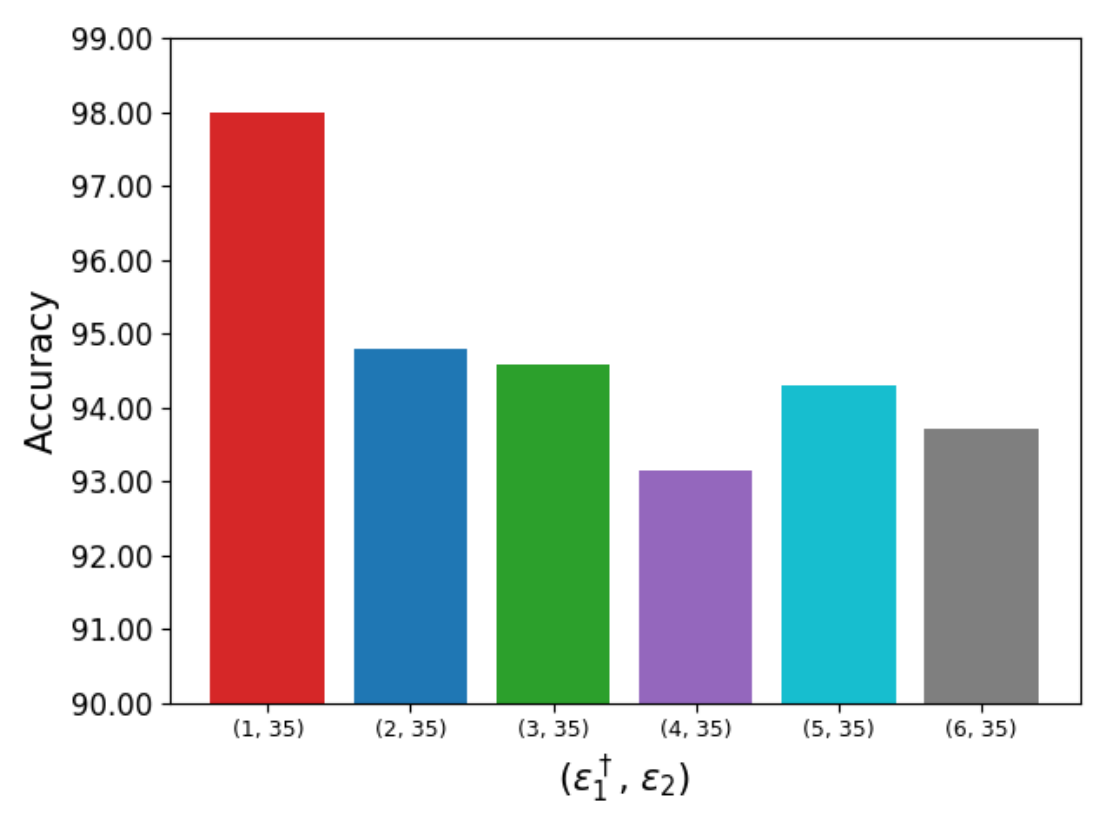}
\caption{$(\epsilon^{\dagger}_1, \epsilon_2)$.}
\label{epsilon_2_south_carolina}
\end{subfigure}
\caption{Hyperparameters $\epsilon_1$ and $\epsilon_2$ selection of TRI-GNN on ACTIVSg500 dataset.}
\label{boundary_plot_south_carolina}
\end{figure}
\end{document}